\documentclass{article}
\usepackage[final]{neurips_2021}
\usepackage[shortlabels]{enumitem}

\usepackage[utf8]{inputenc} 
\usepackage[english]{babel}
\usepackage[T1]{fontenc} 

\usepackage{microtype}
\usepackage{graphicx}
\usepackage{subfigure}
\usepackage{booktabs}
\usepackage{enumitem}
\usepackage{nicefrac}
\usepackage{amssymb}

\usepackage{hyperref}

\usepackage{amsthm}
\usepackage{amsfonts}
\usepackage{amsmath}
\usepackage{mathrsfs}

\setcitestyle{round}
\usepackage{bbm}
\usepackage{mathtools}

\usepackage{url}
\usepackage{dsfont}
\usepackage{booktabs}
\usepackage{makecell}
\usepackage{tabularx}
\usepackage{xcolor, colortbl}
\usepackage[normalem]{ulem}
\usepackage{soul}
\usepackage{prettyref}
\usepackage{wrapfig}

\definecolor{Gray}{gray}{0.85}
\newcolumntype{g}{>{\columncolor{Gray}}c}
\hypersetup{
  colorlinks   = true,
  urlcolor     = blue,
  linkcolor    = blue,
  citecolor   = blue
}

\newcommand{\cA}{\mathcal{A}}
\newcommand{\cB}{{\mathcal B}}

\newcommand{\cH}{\mathcal{H}}

\newcommand{\cL}{\mathcal{L}}

\newcommand{\sM}{{\mathscr M}}

\newcommand{\sU}{{\mathscr U}}

\newcommand{\bR}{{\mathbf R}}

\newcommand{\ba}{{\mathbf a}}

\newcommand{\bh}{{\mathbf h}}

\newcommand{\bs}{{\mathbf s}}

\newcommand{\bv}{{\mathbf v}}

\newcommand{\bx}{{\mathbf x}}
\newcommand{\by}{{\mathbf y}}
\newcommand{\bz}{{\mathbf z}}

\newcommand{\ignore}[1]{}
\newcommand{\zero}{\mathbf{0}}

\def\Rset{\mathbb{R}}
\def\Nset{\mathbb{N}}

\DeclareMathOperator*{\E}{\mathbb E}

\DeclareMathOperator{\sgn}{sign}
\DeclareMathOperator{\interior}{int}

\DeclareMathOperator{\tli}{\wt{\liminf}}
\DeclareMathOperator{\tls}{\wt{\limsup}}

\DeclarePairedDelimiter{\bracket}{[}{]}

\DeclarePairedDelimiter{\paren}{(}{)}

\newcommand{\mres}{\mathbin{\vrule height 1.6ex depth 0pt width
0.13ex\vrule height 0.13ex depth 0pt width 1.3ex}}

\newcommand{\PP}{{\mathbb P}}
\newcommand{\QQ}{{\mathbb Q}}

\newcommand{\dequal}{{\dot =}}

\newcommand{\one}{\mathbbm{1}}

\newcommand{\ov}{\overline}
\newcommand{\td}{\tilde}
\newcommand{\wt}{\widetilde}
\newcommand{\e}{\epsilon}

\newcommand{\set}[2][]{#1 \{ #2 #1 \} }

\newtheorem{theorem}{Theorem}
\newtheorem{definition}{Definition}
\newtheorem{lemma}{Lemma}

\newtheorem{example}{Example}

\makeatletter
\newtheorem*{rep@theorem}{\rep@title}
\newcommand{\newreptheorem}[2]{%
\newenvironment{rep#1}[1]{%
\def\rep@title{#2 \ref{##1}}%
\begin{rep@theorem}}%
{\end{rep@theorem}}}
\makeatother

\newreptheorem{lemma}{Lemma}
\newreptheorem{theorem}{Theorem}
\newreptheorem{corollary}{Corollary}
\newreptheorem{definition}{Definition}
\usepackage{color}
\definecolor{jgGreen}{rgb}{0.0, 0.5, 0.0}

\definecolor{blue}{rgb}{0.0, 0.0, 1.0}

\usepackage[toc, page, header]{appendix}
\setcounter{tocdepth}{0} 

\title{On The Existence of The Adversarial Bayes Classifier (Extended Version)}

\author{Pranjal Awasthi\\
Google Research\ignore{ \& Rutgers University}\\
New York, NY 10011, USA\\
\texttt{pranjalawasthi@google.com} \And Natalie S. Frank\\ Courant Institute\\ New York, NY 10012\\
\texttt{nf1066@nyu.edu}
\And Mehryar Mohri\\
Google Research \& Courant Institute\\
New York, NY 10011, USA\\
\texttt{mohri@google.com}}
\begin{document}
    
\maketitle

\begin{abstract}
Adversarial robustness is a critical property in a variety of modern
machine learning applications. While it has been the subject of
several recent theoretical studies, many important questions related
to adversarial robustness are still open.  In this work, we study a
fundamental question regarding Bayes optimality for adversarial
robustness. We provide general sufficient conditions under which the
existence of a Bayes optimal classifier can be guaranteed for
adversarial robustness. Our results can provide a useful tool for a
subsequent study of surrogate losses in adversarial robustness and
their consistency properties. This manuscript is the extended and corrected version of the paper \emph{On the Existence of the Adversarial Bayes Classifier} published in NeurIPS 2021. There were two errors in theorem statements in the original paper--- one in the definition of pseudo-certifiable robustness and the other in the measurability of $A^\e$ for arbitrary metric spaces. In this version we correct the errors. Furthermore, the results of the original paper did not apply to some non-strictly convex norms and here we extend our results to all possible norms.
\end{abstract}

\section{Introduction}
\label{sec:intro}

A key problem with using neural networks is their
susceptibility to small perturbations: imperceptible changes to the
input at test time may result in an incorrect classification by the
network \citep{szegedy2013intriguing}. A slightly perturbed picture of
a dog could be misclassified as a hand-blower. The same
phenomenon appears with other types of data such as biosequences,
text, or speech. This problem has motivated a series of research
publications studying the design of \emph{adversarially robust}
algorithms, both from an empirical and a theoretical perspective
\citep{szegedy2013intriguing, biggio2013evasion, madry2017towards,
  schmidt2018adversarially,
  athalye2018obfuscated,bubeck2018adversarial, montasser2019vc}.

In the context of classification problems, instead of the standard
zero-one loss, an \emph{adversarial zero-one loss} has been 
adopted which
penalizes a classifier not only if it misclassifies an input $x$ but
also if it does not maintain the correct $x$-label in a
$\e$-neighborhood around $x$ \citep{goodfellow2014explaining,
  madry2017towards, tsipras2018robustness, carlini2017towards}.
Since optimizing the adversarial zero-one loss is computationally
intractable, a common approach for adversarial learning is to use a
surrogate loss instead.  However, optimizing a surrogate loss over a
class of functions may not always lead to a minimizer of the true
underlying loss over that class. In the case of the standard zero-one
loss, there is a large body of literature identifying conditions under
which surrogate losses are \emph{consistent}, that is, minimizing them
over the family of all measurable functions leads to minimizers of the
true loss \citep{Zhang2003, BartlettJordanMcAuliffe2006,
  steinwart2005consistency, Lin2004}. More precisely, as argued by
\citet{long2013consistency}, it is in fact \emph{$\cH$-consistency}
that is needed, which is consistency restricted to the hypothesis set under consideration. A surrogate loss may be consistent for the family of all
measurable functions but not for the specific family
of functions $\cH$, and a surrogate loss can be $\cH$-consistent for a particular family $\cH$, without being consistent for
all measurable functions.

When are adversarial surrogate losses $\cH$-consistent? This problem
is already non-trivial for the standard zero-one loss: while
there are well-known results for the consistency of
losses for the zero-one loss such as \citep{BartlettJordanMcAuliffe2006, steinwart2005consistency}, these results do not hold for
$\cH$-consistency.
Existing theoretical results for $\cH$-consistency
assume that the Bayes risk is zero
\citep{long2013consistency,zhang2020bayes}. A similar situation seems
to hold for the more complex case of the adversarial loss. Recently,
\citet{AwasthiFrankMao2021} gave a detailed study of $\cH$-calibration and
$\cH$-consistency of surrogates to the adversarial loss and
also pointed out some technical issues with some $\cH$-consistency claims made in prior
work \citep{bao2020calibrated}. These authors presented a number of
negative results for adversarial $\cH$-consistency and positive results for some surrogate losses which assume realizability. For these positive
results, the zero Bayes adversarial loss seems necessary. In
fact, the authors show empirically that without the realizability
assumption, $\cH$-consistency does not hold for a variety of surrogate
losses, even when they are $\cH$-calibrated. 

But when is the Bayes adversarial loss zero?
Clearly, the adversarial risk can only be zero if it admits a
minimizer, which we call the \emph{adversarial Bayes classifier}.
However, it is unclear under what conditions such a classifier exists. This is the primary theoretical question that we study in this work. 

We now describe the 
challenges involved in finding
minimizers of the adversarial zero-one loss. 
Most of the
existing work on the study of Bayes optimal classifiers focuses on
loss functions such as the zero-one loss that admit the
\emph{pointwise optimality} property \citep{steinwart2005consistency,
  steinwart2006function}. To illustrate this better, consider the case
of binary classification where on a given input $x$, $\eta(x)$ denotes
the conditional class probability, that is, $\eta(x) \coloneqq \PP(y =
1 \mid x)$. In this case, it is well-known that the Bayes optimal
classifier can be obtained by making optimal predictions per point in
the domain: at a point $x$ predict $1$ if $\eta(x) \geq \frac{1}{2}$,
$-1$ otherwise. Similar to the notion of a Bayes optimal classifier,
an{ adversarial Bayes optimal} classifier is the one that
minimizes the adversarial loss. However, an immediate obstacle is that the pointwise
optimality property does not hold for adversarial losses.

As an example, consider the case of binary classification and
perturbations measured in the $\ell_2$ norm. Then, for a given labeled
point $(x, y)$ and a perturbation radius $\epsilon$, the adversarial
zero-one loss of a classifier $f\colon \Rset^d\to \{-1,+1\}$ is defined as $\max_{x' \colon \|x'
  - x\|_2 \leq \epsilon} \one(f(x') \neq y)$. Thus, the loss at a
point $x$ cannot be measured simply by inspecting the prediction of
the classifier at $x$.  In other words, the construction of an
adversarial Bayes optimal classifier necessarily involves arguing
about the global patterns in the predictions of the classifier across
the entire input domain. As a result, most of the technical tools
developed for the study of Bayes optimal classifiers for traditional
loss functions are not applicable to the analysis of adversarial loss
functions, and new mathematical techniques are required.

The above discussion leads to our second motivation for studying the
question of existence of the adversarial Bayes classifier. Insights regarding the structure of the
adversarial Bayes optimal classifier could have
algorithmic implications. For example, in the case of the standard
zero-one loss, many popular learning algorithms seek to approximate
the conditional probability of a class at a point because the
conditional probability defines the Bayes optimal classifier in this
case. Analogously, one could hope to develop new algorithmic
techniques for adversarial learning with a better understanding of the
properties of adversarial Bayes classifiers. In fact, two recent
publications propose this approach \citep{yang2020,bhattacharjee2020}. Although their results do not rely on the existence of the adversarial Bayes classifier, they implicitly make this assumption to make their arguments clearer. Our work provides a rigorous basis for this premise. 

A second related concept is \emph{certified robustness}. A point $x$ is certifiably robust for a classifier $f$ and a perturbation radius $\e$ if \emph{every} perturbation of radius at most $\e$ leaves the class of $x$ unchanged. In this paper, we further study a property which we refer to as \emph{pseudo-certified robustness}, which is necessary for certified robustness. We show that there always exists an adversarial Bayes classifier which satisfies the pseudo-certified robustness condition for a fixed radius at every point. However, a non-trivial classifier cannot be certifiably robust for a fixed radius at every point -- specifically, a classifier is not certifiably robust at points within $\e$ of the decision boundary.

The concept of certified robustness has algorithmic implications. \cite{CohenRosenfeld2019} recently showed that after training a classifier, a process called \emph{randomized smoothing} makes the classifier certifiably robust at a point $x$ in the $\ell_2$ norm with a radius that depends on the point $x$.
As an adversarial Bayes classifier can be pseudo-certifiably robust but not certifiably robust with a fixed radius at every point, one could try to design algorithms which ensure pseudo-certifiable robustness during or after training. Recent works have explored constructing certificates of robustness as well \citep{Raghunathan2018,WengZhangChenSong2018,ZhangWengChenHsieh2018,WongKolter2018}. A better understanding of the adversarial Bayes classifier could help find additional learning algorithms. 
By
studying the existence of the adversarial Bayes classifier, we take a
first step towards this broader goal. 

We now describe the organization of the paper. Section~\ref{sec:related} summarizes related work and Section~\ref{sec:problem_setup} presents the mathematical formulation of our problem. Section~\ref{sec:main} discusses our main result and the proof. Next,
Section~\ref{sec:meas_discussion} addresses the measurability issues relating to this problem. 
Section~\ref{sec:alt_perturb} demonstrates how our techniques might apply to other models of perturbations. 
Subsequently, in Appendix~\ref{app:measurability}, we prove the measurability results stated in Section~\ref{sec:meas_discussion} and describe a similar result for metric spaces. Next, in Appendix~\ref{app:set_limit_proof}, we prove one of our key lemmas about convergence of sets. These appendices present stand-alone results which do depend on material elsewhere in the appendix.
In Appendix~\ref{app:properties}, we subsequently provide some background material for the results in Appendicies~\ref{app:sequence_original}-\ref{app:generalize_result}. Next, we prove the rest of our key lemmas in Appendicies~\ref{app:sequence_original} and~\ref{app:to_decreasing_sequence_proof}. Lastly, Appendix~\ref{app:generalize_result} states and proves two generalizations of our main result.

\section{Related Work}
\label{sec:related}
Existing theoretical work on adversarial robustness focuses on
questions such as adversarial counterparts of VC-dimension and
Rademacher complexity \citep{cullina2018pac, khim2018adversarial,
  YinRamchandranBartlett2019, awasthi2020adversarial}, evidence of
computational barriers \citep{bubeck2018adversarial,
  bubeck2018adversarial2, nakkiran2019adversarial,
  degwekar2019computational} and statistical barriers towards ensuring
low adversarial test error \citep{tsipras2018robustness}.

\citet{cullina2018pac} formulate a notion of adversarial VC-dimension,
aimed at capturing uniform convergence of robust empirical risk
minimization. The authors show that, for linear models, adversarial
VC-dimension coincides with the VC-dimension. However, in general, the
two could be arbitrarily separate.  In a similar vein,
\citet{khim2018adversarial}, \citet{YinRamchandranBartlett2019} and 
  \citet{awasthi2020adversarial} study the Rademacher complexity of
adversarially robust losses for binary and multi-class
classification. \citet{schmidt2018adversarially} provide an instance
of a learning problem where one can provably demonstrate a gap between
the sample complexity of (standard) learning and adversarial learning.

\citet{tsipras2018robustness} points out a
problem where any learning algorithm that achieves low
(standard) test error must necessarily admit high
adversarial test error, that is close to $1$. This highlights a
fundamental tension between ensuring low test error and low
adversarial error. There are also studies of the conditions on the
data distribution that lead to the presence of adversarial examples
and the design of adversaries that can exploit them
\citep{diochnos2018adversarial,BartlettBubeckCherapanamjeri2021}. The recent work of
\citet{montasser2019vc} shows that any function class with finite
VC-dimension $d$ can be adversarially robustly learned (in a PAC-style
model) using $\exp(d)$ many samples. 

\citet{bubeck2018adversarial,
  bubeck2018adversarial2} provide evidence of computational barriers
in adversarial learning by constructing learning tasks that are easy
in the PAC model, but that become intractable when adversarial
robustness is required. Several recent publications have studied the question of
characterizing the Bayes adversarial risk \citep{pydi2019adversarial,
  bhagoji2019lower} for binary classification and relate it to the
optimal transportation cost between the two class conditional
distributions. While these studies aim to establish a lower bound on the
Bayes adversarial risk, we study a more fundamental question of when
the Bayes adversarial classifier exists. There have also been publications
studying robustness beyond $\ell_p$ norm perturbations
\citep{feige2015learning,feige2018robust, attias2018improved}.  

Finally, there are studies in the mathematical community of
various properties regarding the direct sum of a set and an $\e$-ball, which we use to model adversarial perturbations.  Similar, but
not identical mathematical constructions have also appeared in the PDE
literature.  \citet{cesaroni2017} and \citet{cesaroni2018} consider
perturbations to the measure-theoretic boundary of a set. However, the
measure-theoretic boundary and the topological boundary behave quite
differently.  \citet{chambolle2012} consider problems
involving integrals of indicator functions of perturbed sets $A^\e$
divided by the size of the perturbation. Additionally, 
\citet{bellettini2004} and \citet{Chambolle2015} assume some set properties that are satisfied by sets perturbed by $\ell_p$ balls, and then use
these to show
 regularity and the curvature of the boundary. 
Lastly, \citet{BertsekasShreve96} study the universal $\sigma$-algebra in detail, however they did not show that the sets we use in this paper are universally measurable. We prove a new measurability result in Section~\ref{sec:meas_discussion}.

\section{Problem Setup}
\label{sec:problem_setup}

We study binary classification with class labels in $\{-1,+1\}$.
We consider a probability distribution $\mathcal D$ over
$\Rset^d\times \{-1, +1\}$.
For convenience, $\eta$ will denote the conditional distribution, $\eta(\bx) = \mathcal D(Y = +1 | \bx)$ for any $\bx \in \Rset^d$, and $\PP$ will denote the marginal,
$\PP(A) = \mathcal D(A\times \{-1,+1\})$ for any measurable set $A \subseteq \Rset^d$.
Let $f \colon \Rset^d \to \Rset$ be a function whose sign defines a classifier. Then, for a perturbation set $B$, the \emph{adversarial loss} of $f$ is defined as
\begin{equation*}
\label{eq:adverarial risk}
R^\e(f)= \E_{(x,y) \sim \mathcal D} \bracket*{\sup_{\bh \in B} \one_{y\sgn(f(\bx+\bh))<0} } \quad \text{where} \quad \sgn(z)=
\begin{cases}
+1&\text{if }z>0\\
-1&\text{otherwise}
\end{cases}.
\end{equation*}
The adversarial loss has been extensively studied in recent years \citep{montasser2019vc, tsipras2018robustness, bubeck2018adversarial, khim2018adversarial, YinRamchandranBartlett2019}, motivated by the empirical phenomenon of adversarial examples \citep{szegedy2013intriguing}. In the rest of the paper, we will find it more convenient to work with an alternative set-based definition of classifiers~(and adversarial losses), which we describe below. The function $f$ induces two complementary sets $A = \set{\bx
  \colon f(\bx)> 0 }$ and $A^C = \set{\bx\colon f(\bx)\leq 0}$. 
Conversely, specifying the set $A$ is equivalent to specifying a
function $f$ since one could choose $f(\bx) = \one_A(\bx)$. In the
remainder of the paper, we will specify the set of points $A$ classified as
$+1$ rather than the function $f$.  The classification
risk of a set $A$ is then expressed as
\begin{equation}
\label{eq:bayes_objective}
R(A)
= \int (1 - \eta(\bx)) \one_A(\bx) + \eta(\bx) \one_{A^C}(\bx) \, d\PP.
\end{equation} 
    In the above formulation, it is easy to see that a Bayes optimal classifier is the set
$A = \{\bx\colon \eta(\bx)> \frac 12\}$. We now extend this viewpoint to adversarial losses.  We assume
that the adversary knows the classification set $A$ and that the
adversary seeks to perturb each point in $\Rset^d$ 
outside of $A$, via an additive perturbation in a set $B$. 
In typical applications, $B$ is a ball in some norm, and in the rest of the paper we will assume that $B=\overline{B_\e(\zero)}$ is a closed  ball with radius $\e$ centered at the origin. Next, we define
$A^\e$ to be the set of points that
can fall inside $A$ after an additive perturbation of magnitude
at most $\epsilon$. Formally, $A^\e = \{\bx \in \Rset^d \colon \exists \bh \in \overline{B_\e(\zero)} \text{ for which }\bx+\bh\in A\}$.
Therefore, we can define the adversarial risk as 
\begin{equation}
\label{eq:adversarial_loss} R^\e(A) = \int \paren*{1 - \eta(\bx)} \one_{A^\e}(\bx) + \eta(\bx) \one_{(A^C)^\e}(\bx) \, d\PP.\end{equation}

\citet{pydi2019adversarial,bhagoji2019lower} also studied the
adversarial Bayes classifiers using the $\empty^\e$
operation. We will now re-write $A^\e$ in a form more amenable to analysis:
\begin{align*}
A^\e & = \{ \bx\in \Rset^d\colon \exists \bh \in \overline{B_\e(\zero)}| \bx+\bh\in A\}=\{ \bx\in \Rset^d\colon \exists \bh \in \overline{B_\e(\zero)}\text{ and }\ba\in A| \bx+\bh=\ba \}\\
&= \set[\Big]{\bx \colon \exists \bh\in \overline{B_\e(\zero)}\text{ and } \ba \in A \mid
 \ba - \bh = \bx}=\{\ba-\bh\colon \ba\in A, \bh\in \overline{B_\e(\zero)} \}=A\oplus  \overline{B_\e(\zero)},
\end{align*}
where the last equality follows from the symmetry of the ball $\overline{B_\e(\zero)}$. From these relations, we can recover a more typical expression of the adversarial loss. Note that 
$\one_{A^\e}(\bx)=\one_{A\oplus \overline{B_\e(\zero)}}(\bx)=\sup_{\bh\in \overline{B_\e(\zero)}} \one_{A}(\bx+\bh)$,
which implies
\begin{align}
R^\e(A)
&= \int (1 - \eta(\bx)) \sup_{\bh\in \overline{B_\e(\zero)}}\one_A(\bx+\bh) + \eta(\bx) \sup_{\bh\in \overline{B_\e(\zero)}}\one_{A^C}(\bx+\bh) \, d\PP. \label{eq:bayes_adv_objective_sup}
\end{align} 

The papers
\citep{szegedy2013intriguing,biggio2013evasion,madry2017towards} (and
many others) use the multi-class version of this loss to define adversarial
risk. More specifically, they evaluate the risk on the set
$A=\{f(\bx)\geq 0\}$, where $f$ is a function in their model class.  

We define the {\em adversarial Bayes risk} $R^\e_*$ as the infimum
of \eqref{eq:adversarial_loss} over all measurable sets, and we say
that the set $A$ is an adversarial Bayes classifier if
$R^\e(A)=R^\e_*$. Note that the integral above is defined only if the
sets $A^\e, (A^C)^\e$ are measurable. This consideration is nontrivial
as there do exist measurable sets whose direct sum is not
measurable, see \citep{ErdosStone1970, CiesielskiFejzicFreiling} for
examples. 
\setlength{\intextsep}{-1pt}
\setlength{\columnsep}{6pt}
\begin{wrapfigure}{r}{0.275\textwidth}
\centering
\includegraphics[width=.275\textwidth]{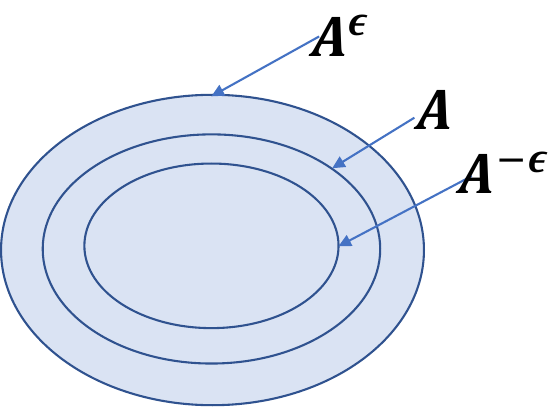}
\vskip -.1in
\caption{Sets $A^\e$ and $A^{-\e}$ with $B=\overline{B_\e^2(\zero)}$, the closed $\ell_2$ ball.}
\label{fig:adv_sets}
\end{wrapfigure}
To address this issue, in Section~\ref{sec:meas_discussion}, we discuss a
$\sigma$-algebra called the \emph{universal $\sigma$-algebra} which is
denoted $\sU(\Rset^d)$.  Specifically, we show that if $A\in
\sU(\Rset^d)$, then $A^\e\in \sU(\Rset^d)$ as well. Thus, working in
the universal $\sigma$-algebra $\sU(\Rset^d)$ allows us to define the
integral in \eqref{eq:adversarial_loss} and then optimize $R^\e$ over
sets in $\sU(\Rset^d)$. In particular, throughout this paper, we adopt
the convention that $\PP$ is the completion of a Borel measure restricted to $\sU(\Rset^d)$. (We elaborate on this
construction in Section~\ref{sec:meas_discussion}.) We
call a set \emph{universally measurable} if it is in the universal
$\sigma$-algebra $\sU(\Rset^d)$.

We now introduce another important notation: we define
$A^{-\e}\colon = ((A^C)^\e)^C$. The set $A^{-\e}$
contains the points that cannot be perturbed to fall outside of
$A$ (see Lemma~\ref{lemma:more_about_-eps} in Appendix~\ref{app:properties} for a formal proof).
Figure~\ref{fig:adv_sets} depicts the sets $A,A^\e$ and $A^{-\e}$.  

\section{Main Results}
\label{sec:main}

In this section, we prove our main result establishing the existence
of the optimal adversarial classifier. We first discuss challenges in
establishing this theorem.
In the case of the standard 0-1 loss, the risk is defined in
\eqref{eq:bayes_objective} with the sets $A$ and $A^C$
disjoint. As a result,
the integrand equals either $\eta(\bx)$ or $(1-\eta(\bx))$ at each
point. Thus  the set for which $1-\eta(\bx)<
\eta(\bx)$ minimizes $R$. In other words, the Bayes
classifier minimizes the objective $\min(\eta(\bx),1-\eta(\bx))$ at
each point.

On the other hand, the same reasoning does not apply to the
adversarial risk. The adversarial risk at a single point $\bx$ depends
on all the points in $\overline{B_{\e}(\bx)}$. Hence, one cannot hope to
find the adversarial Bayes classifier by studying the risk in a
pointwise manner. 

Next, we introduce the concepts of certifiable robustness and pseudo-certifiable robustness. 
    \begin{definition}\footnote{Pseudo-certifiable robustness was defined differently in the original version of this paper. We thank Ryan Murray for pointing out an error in Theorem~\ref{th:existence_basic} stemming from the earlier version of this definition.}
        Fix a perturbation radius $\e$. We say that a classifier $A$ is \emph{certifiably robust at a point $\bx$ with radius $\e$} if either $\bx\in A$ and $\ov{B_\e(\bx)}\subset A$, or $\bx\in A^C$ and $\ov{B_\e(\bx)}\subset A^C$. We say that a classifier $A$ is \emph{pseudo-certifiably robust at a point $\bx\in A$ with radius $\e$} if there exists a ball $\ov{B_\e(\by)}$ with $\bx\in \overline{B_\e(\by)}$ and $\ov{B_\e(\by)}\subset A$. We say a classifier $A$ is \emph{pseudo-certifiably robust} if it is pseudo-certifiably robust with radius $\e$ at every point.
    \end{definition}

In other words, a classifier is certifiably robust at a point $\bx\in A$ with radius $\e$ if the entire $\e$-ball around $\bx$ is classified the same as $A$, and a classifier is pseudo-certifiably robust at a point $\bx\in A$ with radius $\e$ if \emph{some} closed $\e$-ball radius containing $\bx$ is included in $A$. Pseudo-certifiable robustness is a necessary condition for certifiable robustness. 

We now discuss potential algorithmic applications of pseudo-certifiable robustness. To begin, we start by defining the set of points at which a classifier is not pseudo-certifiably robust. If we define
\begin{equation}
F(A)=\{\bx\in A:\text{ every closed $\e$-ball containing $\bx$ also intersects }A^C\}\label{eq:define_F(A)}.
\end{equation}

\setlength{\intextsep}{0pt}
\setlength{\columnsep}{5pt}
\begin{wrapfigure}{t}{.5\textwidth}
\centering
\vskip -.05in
\includegraphics[scale=0.435]{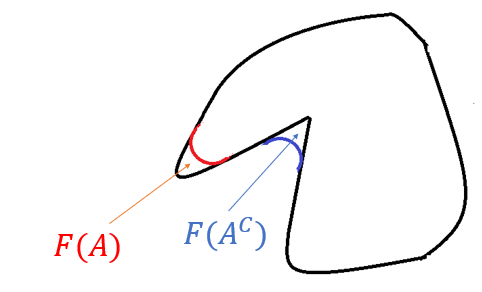}
\vskip -.15in
\caption{The figure illustrates a set $A$ with the sets $F(A)$ and $F(A^C)$ roughly indicated. For a point $\ba\in F(A)$, every closed $\e$-ball containing $\ba$ also intersects $A^C$ while for $\ba \in F(A^C)$ every closed $\e$-ball containing $\ba$ also intersects $A$.}
\label{fig:F(A)_F(A^C)}
\vskip -.15in
\end{wrapfigure}
Then, the set of points where a classifier is not pseudo-certifiably robust is $F(A)$. 
In Appendix~\ref{app:sequence_original}, we show that ``subtracting" from a classifier the points at which it is not pseudo-certifiably robust can only reduce the risk. Similarly, ``adding" to a classifier $A$ the points at which $A^C$ is not pseudo-certifiably robust can only reduce the risk as well. Formally, we show that $R^\e(A-F(A))\leq R^\e(A)$ and $R^\e(A\cup F(A^C)\leq R^\e(A)$ (Lemma~\ref{lemma:decreaseR_e(A)}). As illustrated in Figure~\ref{fig:F(A)_F(A^C)}, $F(A),F(A^C)$ are adjacent to the boundary $\partial A$. Furthermore, $F(A)$ is not very ``large"--- in fact, $F(A)^{-\e}=\emptyset$. These observations suggest that, typically, if $A$ is not pseudo-certifiably robust, then there is another classifier with lower risk that can be found by making local changes to $A$.

We now state our main existence result. We define the measures $\PP_0$, $\PP_1$ as in \citep{pydi2019adversarial} as
\[\PP_1(A)=\int_A \eta d\PP,\quad \PP_0(A)=\int_A(1-\eta)d\PP\]

\begin{theorem}
\label{th:existence_basic} \footnote{The original version of this paper did not assume that either $\PP_0$ or $\PP_1$ was absolutely continuous with respect to Lebesgue measure.}
Let $\PP$ be the completion of a Borel measure on $\mathcal
B(\Rset^d)$ restricted to $\sU(\Rset^d)$ and assume that either $\PP_0,\PP_1$ is absolutely continuous with respect to Lebesgue measure. Define $A^\e=A\oplus
\overline{B_\e(\zero)}$, where $B_\e(\zero)$ is a norm ball. Then, there exists a minimizer of
\eqref{eq:adversarial_loss} when minimizing over $\sU(\Rset^d)$. Furthermore, there exists a minimizer that is pseudo-certifiably robust and a minimizer whose complement is pseudo-certifiably robust.
\end{theorem}

The original version of this paper published in NeurIps \citep{AwasthiFrankMohri2021} proves this result for a restricted class of norms. For perturbations in an arbitrary norm, the theorem provides a positive guarantee:
for any distribution, the adversarial Bayes classifier exists. In fact the proof of Theorem~\ref{th:existence_basic} shows that an even stronger result holds: under the hypotheses of Theorem~\ref{th:existence_basic}, every minimizing sequence $A_{n}$ has a subsequece $A_{n_j}$ for which $\limsup_j A_{n_j}$ is the adversarial Bayes classifier. This conclusion is analogous to saying that every minimizing sequence must have a convergent subsequence.
 
 To understand the significance of this statement, we compare to minimizing a function over $\Rset$. Consider the three functions $f(x)=(x^2-1)^2$, $g(x)=\sin(x)^2$, and $h(x)=1/x^2$. The infimum of all three functions is 0. We can find minimizing sequences for $f,g$, and $h$ which don't converge. For instance, the sequence given by
 \[x_k=
    \begin{cases}
        +1 &k \text{ even}\\
        -1&k \text{ odd}
    \end{cases}\]
    is a minimizing subsequence of $f$ because $f(x_k)=0$ for all $k$, but $x_k$ is not a convergent subsequence. Intuitively, this phenomenon occurs because $x_i$ is actually comprised of two subsequences each of which converges to a minimizer of $f$. In this case, every minimizing sequence of $f$ has a convergent subsequence.
    On the other hand, minimizing sequences of $g$ have very different behavior. For instance, consider the sequence given by $y_k=k\pi.$
    Then $y_k$ is a minimizing sequence of $g$ because $g(y_k)=0$ for all $k$. However, the sequence $y_k$ diverges to infinity, so $\{y_k\}$ does not have any convergent subsequence.
        Lastly, the sequence $y_k$ also minimizes $h(x)$. Notably, $h$ does not have a minimizer and thus all minimizing sequences diverge.

We also expect
an analogous existence result for perturbations by open balls.

Next, we briefly discuss two ways in which our results relate to the consistency of adversarial losses. First, \citet{AwasthiFrankMao2021} show that when $H$ is the class of linear functions, if the surrogate risk $R^\e_\Psi$ of the adversarial surrogate loss $\Psi$ is zero for a given distribution, then $\Psi$ is $H$-consistent for that distribution. The existence of the adversarial Bayes classifier is required for this condition to hold. Next, a surrogate loss $\Psi$ is consistent if a minimizing sequence of functions $f_i$
also minimizes 0-1 adversarial loss. However, it may be easier to study minimizing sequences of the $\Psi$ loss when we have information about the adversarial Bayes classifier. The proof of Theorem~\ref{th:existence_basic} shows that under reasonable assumptions on $\eta$ and $\PP$, every sequence $A_n$ has a subsequence $A_{n_j}$ for which $\limsup A_{n_j}$ is an adversarial Bayes classifier. Thus, we can find conditions under which $\{\bx\colon f_i(\bx)\geq 0\}$
approaches a set $A$. In other words: If $\Psi$ is consistent and
$f_i$ is a sequence that minimizes the adversarial $\phi$ loss, then
$f_i\geq 0$ must have a subsequence that approaches an adversarial Bayes classifier. 


\subsection{Proof strategy}
\label{sec:existence_basic_proof_strategy}

We first outline the main ideas behind the proof of
Theorem~\ref{th:existence_basic}, which is presented in the next
subsection. The proof applies the direct method of the calculus of
variations.  Specifically, we apply the following procedure:
\begin{description}[itemsep=-0mm]
\item[1)] Choose a sequence of sets $\set{A_n}\subset \sU(\Rset^d)$ 
along which $R^\e(A_n)$ approaches its infimum;

\item[2)] Extract a subsequence $\set{A_{n_j}}$ of 
$\set{A_n}$  that is convergent in some topology;

\item[3)] Show that $R^\e$ is sequentially lower semi-continuous: for a convergent subsequence $\{A_n\}$,
\[
\liminf_{n\to \infty} R^\e( A_n)\geq R^\e(\lim_{n\to \infty} A_n).
\]
\end{description}

In typical applications of the direct method, step 2) is almost
immediate as it is achieved by working in the appropriate Sobolev
space. However, showing step 3) is usually quite difficult. See \citet{Dacorogna2008} for more on the direct method in PDEs.
In contrast, in our scenario, the situation is the opposite: finding
the right topology for step 2) is quite difficult but the lower
semi-continuity is a direct implication of Fatou's lemma.

As described above, one of the main considerations in the proof of
Theorem~\ref{th:existence_basic} is the convergence of set sequences.
In order to find a minimizer, we need the
indicator functions $\one_{(A_n)^\e}, \one_{(A_n^C)^\e}$ to converge.
With that in mind, we adopt the following standard set-theoretic
definitions for a sequence of sets $\{A_n\}$:
\begin{equation}\label{eq:liminf_limsup)_def}
 \limsup A_n \!=\! \bigcap_{N \geq 1} \bigcup_{n\geq N} A_n 
\text{ and }  \liminf A_n \!=\! \bigcup_{N \geq 1} \bigcap_{n\geq N} A_n.
\end{equation}
 As with $\limsup$ and $\liminf$
for a sequences of numbers, $\liminf A_n \subset \limsup A_n$ or in other words $\one_{\liminf A_n}\leq \one_{\limsup A_n}$. With
the above definitions, the following holds:
\[
\liminf_{n\to \infty} \one_{A_n} 
= \one_{\liminf A_n}
\text{ and }\limsup_{n\to \infty} \one_{A_n}
= \one_{\limsup A_n}.
\] 
Specifically, these relations imply that the limit $ \lim_{n\to \infty}
\one_{A_n}$ exists $\PP$-a.e. if and only if the $\limsup$ and the $\liminf$
of the sequence $\{A_n\}$ match up to sets of measure zero under
$\PP$. We denote equality up to sets of Lebesgue measure zero by $\doteq$. In
order to find a sequence for which $\liminf A_n^\e \doteq \limsup A_n^\e$, we apply a theorem from variational analysis in \citep{RockafellarWets1998}. Specifically, we show

\begin{lemma}\label{lemma:set_limit_2}
    Let $\QQ$ be a finite positive measure and assume that $\QQ$ is absolutely continuous with respect to Lebesgue measure. For any sequence of sets $A_n$, there is a sub-sequence $A_{n_j}$ for which 
    \[\limsup A_{n_j}^\e\dequal \liminf A_{n_j}^\e\]
\end{lemma}

The lemma above is proved in Appendix~\ref{app:set_limit_proof}.
The next challenge is that $\liminf A_n^\e/ \limsup A_n^\e$ 
do not necessarily equal $A^\e$ for some set $A$. However, moving the $\empty^\e$ operation inside the $\liminf/\limsup$ decreases the risk.

\begin{lemma}\label{lemma:limsup_liminf_e_commute}
    Let $A_n$ be any sequence of sets. Then
    \[\limsup A_n^\e \supset \left( \limsup A_n\right)^\e \quad \text{and}\quad \liminf A_n^\e \supset \left( \liminf A_n\right)^\e\]
\end{lemma}
The lemma is proved in Appendix~\ref{app:sequence_original}. 

Finally, it remains to show the claim about pseudo-certifiable robustness. We prove that for any set $A$ there are sets $B$,$E$ for which $B$ and $E^C$ are pseudo-certifiably robust and have lower robust risk than $A$.

\begin{lemma}\label{lemma:to_pcr_set}
    Let $A$ be any set. Then there exist sets $B,E$ for which $B$ and $E^C$ are pseudo-certifiably robust and $R^\e(B)\leq R^\e(A)$, $R^\e(E)\leq R^\e(A)$.
\end{lemma}
To prove this result, we show that applying $\empty^{-\e}$, $\empty^\e$ in succession to a set removes $F(A)$ as defined in \eqref{eq:define_F(A)}. Analogously, applying $\empty^{-\e}, \empty^{\e}$ in succession to a set adds $F(A^C)$. We prove the above Lemma in Appendix \ref{app:to_decreasing_sequence_proof}. 
 
 \subsection{Formal Proof of Theorem~\ref{th:existence_basic}}\label{sec:main_proof}
 We now formally prove Theorem~\ref{th:existence_basic} using these three lemmas.

	\begin{proof}[Proof of Theorem~\ref{th:existence_basic}]
		WLOG assume that $\PP_1$ is absolutely continuous with respect to Lebesgue measure. 

        Let $A_n$ be a minimizing sequence of $R^\e$. By Lemma~\ref{lemma:set_limit_2}, there is a subsequence $A_{n_j}$ for which $\limsup_j A_{n_j}^\e\dequal \liminf_j A_{n_j}^\e$ and thus
        \begin{equation}\label{eq:liminf_limsup_equal}
            \int \eta \one_{\limsup_j A_{n_j}^\e} d\PP=\int \eta\one_{\liminf_j A_{n_j}^\e}d\PP
        \end{equation}
        
        Fatou's lemma then implies that 
        \begin{align*}
            \inf_A R^\e(A)&=\liminf_{j\to \infty} R^\e(A_{n_j})\geq \int\liminf_{j\to \infty} \left(\eta \one_{A_{n_j}^\e}+(1-\eta)\one_{(A_{n_j}^C)^\e}\right)d\PP\\
            &\geq \int\liminf_{j\to \infty} \eta \one_{A_{n_j}^\e}+\liminf_{j\to \infty} (1-\eta)\one_{(A_{n_j}^C)^\e}d\PP\\
            &=\int \eta \one_{\limsup_j A_{n_j}^\e}+(1-\eta)\one_{\liminf_j(A_{n_j}^C)^\e}d\PP &(\text{Equation~\ref{eq:liminf_limsup_equal})}\\
            &\geq \int \eta \one_{(\limsup_j A_{n_j})^\e}+(1-\eta)\one_{(\liminf_jA_{n_j}^C)^\e}d\PP &\text{(Lemma~\ref{lemma:limsup_liminf_e_commute})}\\
            &=\int \eta \one_{(\limsup_j A_{n_j})^\e}+(1-\eta)\one_{((\limsup_jA_{n_j})^C)^\e}d\PP
        \end{align*}
        Therefore, $A=\limsup_j A_{n_j}$ is a minimizer of $R^\e$. Lemma~\ref{lemma:to_pcr_set} then implies that there are sets $B,E$ for which $B,E^C$ are pseudo-certifiably robust and $R^\e(B)\leq R^\e(A)$ and $R^\e(E)\leq R^\e(A)$. Therefore, $B,E$ are minimizers as well.
	\end{proof}

 \subsection{Proof Outline for Lemmas~\ref{lemma:set_limit_2},~\ref{lemma:limsup_liminf_e_commute}, and~\ref{lemma:to_pcr_set}}
\label{sec:sequence_original_proof_strategy}

In this section, we explain the intuition for the proofs of Lemmas
~\ref{lemma:set_limit_2},~\ref{lemma:limsup_liminf_e_commute}, and~\ref{lemma:to_pcr_set}.
 Lemmas~\ref{lemma:set_limit_2} and~\ref{lemma:limsup_liminf_e_commute} follow directly from properties of the $\empty^\e$ operation. Specifically, in Appendix~\ref{app:properties_basic} we show that
 \begin{equation}
 \label{eq:union_prop_2}
\left(\bigcup_{i=1}^\infty A_i\right)^\e=\bigcup_{i=1}^\infty A_i^\e \quad \text{and} \quad
\left(\bigcap_{i=1}^\infty A_i\right)^\e\subset\bigcap_{i=1}^\infty A_i^\e. 
\end{equation}
As the $\liminf$ and $\limsup$ operations of \eqref{eq:liminf_limsup)_def} are defined by unions and intersections, this result immediately implies Lemma~\ref{lemma:limsup_liminf_e_commute}. Next, one can use the relations of \eqref{eq:union_prop_2} to argue that if $B=(A^{-\e})^{\e}$, then $(B^C)^{\e}=(A^C)^{\e}$ and $B^\e\subset A^\e$ so therefore $R^\e(B)\leq R^\e(A)$. One can make an analogous statement with $E=(A^\e)^{-\e}$, see Appendix~\ref{app:properties_succession} for the formal statement and proof.
 
The proof of Lemma~\ref{lemma:set_limit_2} combines the analysis of the $\empty^\e$ operation with measure theoretic considerations. \citet{RockafellarWets1998} prove a set convergence result for a different notion of the $\liminf$ and $\limsup$ of a sequence of sets $S_n$. This notion of set convergence includes points that are arbitrarily close to $S_n$ for infinitely many $n$. The standard $\liminf/\limsup$ operations have a similar interpretation in terms of sequences. Recall that
\begin{equation}\label{eq:liminf_sequence_def}
    \liminf S_n = \{\bx \colon \text{ there exists an $N$ for which $\bx \in S_n$ for all $n>N$}\}    
\end{equation}

\begin{equation}\label{eq:limsup_sequence_def}
    \limsup S_n=\{\bx\colon \text{ there exists a sequence $n_j$ for which } \bx\in S_{n_j} \text{ for all }j\}    
\end{equation}

On the other hand, the \citet{RockafellarWets1998} defines $\tli,\tls$ in terms of convergent sequences $\{\bx_n\}$ with $\bx_n \in S_n$:

	\[\tli S_n=\{\bx\colon \text{there exists a sequence with }\bx_n\in S_n \text{ and }\lim_{n\to \infty} \bx_n=\bx\}\]

    \begin{equation}\label{eq:tls_def}
        \tls S_n=\{\bx\colon \text{there exists a subsequence }\{n_j\} \text{ with }\bx_{n_j}\in S_{n_j} \text{ and }\lim_{i\to \infty} \bx_{n_j}=\bx\}    
    \end{equation}

In other words, a point $\bx$ is in $\liminf S_n$ if $\bx\in S_n$ for sufficiently large $n$ while  $\bx\in \tli S_n$ if the distance between $\bx$ and $S_{n_j}$ approaches zero. Similarly, a point is in $\limsup S_n$ if $\bx \in S_{n_j}$ for some subsequence $n_j$ while $\bx \in \tls S_{n_j}$ if there is a subsequence $n_j$ for which the distance between $\bx$ and $S_n$ approaches zero. This characterization immediately implies $\liminf S_n\subset \tli S_n$ and $\limsup S_n\subset \tls S_n$. For the notions of set limit $\tli,\tls$ every subsequence has a convergent subsequence:

	\begin{theorem}[\citep{RockafellarWets1998}]\label{th:set_limit_Rockafellar}
		Let $S_n$ be any sequence of sets in $\Rset^d$. Then there is a subsequence $S_{n_j}$ of $S_n$ for which $\tli S_{n_j}=\tls S_{n_j}$. 
		
	\end{theorem}
    This statement is a consequence of Theorem~4.18 of \citep{RockafellarWets1998}.
	In other words, one can always choose a subsequence $S_{n_k}$ of $S_{n}$ for which the $\tli$ and the $\tls$ match. This result is \emph{false} for the standard definitions of $\liminf$, $\limsup$.

    However, pseudo-certifiably robust sets are fairly well-behaved, so one would hope such sets would also interact well with the standard definition of $\liminf$ and $\limsup$. Furthermore, a standard argument from geometric measure theory implies that pseudo-certifiably robust sets have a measure zero boundary. 
        \begin{lemma}\label{lemma:regularity_ball}
        Let $\mu$ be Lebesgue measure and let $S\subset \Rset^d$. 
    If for each $\bs\in \partial S$ there exists a ball $B_\e(\ba)$ with $B_\e(\ba)\subset S$ and $\bs\in \partial B_\e(\ba)$, then $\mu(\partial S)=0$.
    \end{lemma}
    See Appendix~\ref{app:set_limit_lemmas} for a proof. 
    
    One can show that for a subsequence $A_{n_j}$ with $\tli A_{n_j}=\tls A_{n_j}$, the set $\liminf A_{n_j}^\e$ satisfies a property similar to pseudo-certifiable robustness: for all $\bx \in \liminf A_{n_j}^\e$, there is a ball $B_\e(\ba)$ for which $\bx \in \ov{B_\e(\ba)}$ and $B_\e(\ba)\subset \liminf A_{n_j}^\e$ (See Lemma~\ref{lemma:liminf_A^e_well_behaved} in Appendix~\ref{app:set_limit_proof}). In other words, the condition of Lemma~\ref{lemma:regularity_ball} is satisfied at every point in $\liminf A_{n_j}^\e$. By taking limits, one can then argue that this property also holds for all $\bx\in \partial \liminf A_{n_j}^\e$. 
    Lemma~\ref{lemma:regularity_ball} then implies Lemma~\ref{lemma:set_limit_2}.

\section{Addressing Measurability}
\label{sec:meas_discussion}

As mentioned earlier, defining the adversarial loss requires integrating 
over $A^\epsilon$. However, one must ensure that $A^\epsilon$ is measurable. 
Furthermore, in the proof of Lemma~\ref{lemma:to_pcr_set}, we apply the $\empty^\e, \empty^{-\e}$ operations multiple times in succession. In particular, we consider sets of the form $(A^{-\e})^{\e}$. Hence we would like to work in a $\sigma$-algebra $\Sigma$ for which if $A\in \Sigma$, $A^\e \in \Sigma$ as well. Below, we explain that a $\sigma$-algebra called the \emph{universal $\sigma$-algebra} satisfies this property. 


Let $\cB(\Rset^d)$ be the Borel $\sigma$-algebra on $\Rset^d$ and let $\nu$ be a measure on this $\sigma$-algebra. We will denote the completion of the measure space $(\nu, \Rset^d, \cB(\Rset^d))$ by $(\overline \nu, \Rset^d,\cL_\nu(\Rset^d)) $, where $\cL_\nu(\Rset^d)$ is the completion of $\cB(\Rset^d)$ under $\nu$. Let $\sM(\Rset^d)$ be the set of all finite Borel measures on $\Rset^d$. Then we define the \emph{universal $\sigma$-algebra} as $\sU(\Rset^d)=\bigcap_{\nu\in \sM(\Rset^d)} \cL_\nu(\Rset^d)$. 
In other words, $\sU(\Rset^d)$ is the sets which are measurable under \emph{every} complete finite Borel measure. For the universal $\sigma$-algebra, we have the following theorem proved in Appendix~\ref{app:meas_vector}:
\begin{theorem}\label{th:A^e_univ_meas} 
If $A\in \sU(\Rset^d)$, then $A^\e\in \sU(\Rset^d)$ as well.
\end{theorem}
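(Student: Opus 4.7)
The plan is to realize $A^\e$ as the first-coordinate projection of a universally measurable subset of $\Rset^d \times \Rset^d$ whose second coordinate lies in the compact ball $K := \overline{B_\e(\zero)}$, and then to invoke a projection-preservation property exploiting that compactness. Using that $K = -K$, I would rewrite
\[
A^\e = \set[\big]{\bx \in \Rset^d : \exists \bh \in K,\ \bx + \bh \in A} = \pi_1\paren[\big]{\phi^{-1}(A) \cap (\Rset^d \times K)},
\]
where $\phi(\bx, \bh) = \bx + \bh$ is continuous and $\pi_1$ is projection onto the first factor.

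Next I would show $\phi^{-1}(A) \in \sU(\Rset^{2d})$ via a pushforward argument. For any finite Borel measure $\mu$ on $\Rset^{2d}$, the pushforward $\phi_*\mu$ is a finite Borel measure on $\Rset^d$, so by hypothesis $A \in \cL_{\phi_*\mu}(\Rset^d)$, and there exist Borel $B_1 \subseteq A \subseteq B_2$ with $\phi_*\mu(B_2 \setminus B_1) = 0$. Pulling back, $\phi^{-1}(B_1) \subseteq \phi^{-1}(A) \subseteq \phi^{-1}(B_2)$ are Borel with $\mu$-null difference, so $\phi^{-1}(A) \in \cL_\mu$; the extension to $\sigma$-finite $\mu$ is by decomposition into finite-measure pieces. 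Intersecting with the Borel cylinder $\Rset^d \times K$ preserves universal measurability, giving $E := \phi^{-1}(A) \cap (\Rset^d \times K) \in \sU(\Rset^{2d})$.

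The decisive step is concluding $\pi_1(E) = A^\e \in \sU(\Rset^d)$. Fix any $\sigma$-finite Borel $\nu$ on $\Rset^d$ and form $\mu = \nu \otimes \lambda_K$ on $\Rset^d \times K$ (with $\lambda_K$ a probability measure on $K$). Since $E \in \cL_\mu$, inner regularity in the product space yields compact $C_n \subseteq E$ with $\mu(E \setminus \bigcup_n C_n) = 0$; because $K$ is compact, the restriction of $\pi_1$ to $\Rset^d \times K$ is proper, so each $\pi_1(C_n)$ is compact and $\bigcup_n \pi_1(C_n) \subseteq A^\e$ is an $F_\sigma$ Borel inner approximation. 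For the matching outer approximation I would use the dual identity $(A^\e)^c = \set[\big]{\bx : \set{\bx}\times K \subseteq (\Rset^d \times K) \setminus E}$ together with compact $D_m \subseteq (\Rset^d \times K) \setminus E$; the eroded sets $\set[\big]{\bx : \set{\bx}\times K \subseteq D_m}$ are compact (closed subsets of the compact $\pi_1(D_m)$, by properness) and assemble into an $F_\sigma$ Borel inner approximation of $(A^\e)^c$.

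The principal obstacle is showing that the residual set, where $A^\e$ differs from its Borel sandwiches, is $\nu$-null. This reduces to a projection property of the universal $\sigma$-algebra along a compact factor, which is the heart of the new measurability result. The naive shortcut of approximating $A$ directly by Borel $B_1 \subseteq A \subseteq B_2$ with $\nu(B_2 \setminus B_1) = 0$ and hoping $\nu(B_2^\e \setminus B_1^\e) = 0$ cannot work, because Minkowski sum with $K$ can inflate $\nu$-null Borel sets into $\nu$-positive ones: for example, a $(d-1)$-dimensional affine subspace has Lebesgue measure zero but its $\e$-neighborhood has positive Lebesgue measure. Lifting to the product $\Rset^d \times K$, where compactness of the fiber $K$ allows compact-set approximations to project to compact sets, is what substitutes for the absent Minkowski null-preservation and lets the projection inherit universal measurability from $E$.
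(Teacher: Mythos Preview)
Your setup---realizing $A^\e$ as the first-coordinate projection of a universally measurable subset of a product space---matches the paper's exactly (up to the change of variables $\ba = \bx + \bh$). Your pushforward argument showing $\phi^{-1}(A) \in \sU(\Rset^{2d})$ is correct and parallels the paper's Lemma~\ref{lemma:prod_univ_meas}.

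The genuine gap is the projection step, and the product-measure inner-regularity argument you sketch does not close it. The problem is thin sections: $\mu(E \setminus \bigcup_n C_n) = 0$ for $\mu = \nu \otimes \lambda_K$ does \emph{not} imply $\nu\bigl(\pi_1(E) \setminus \bigcup_n \pi_1(C_n)\bigr) = 0$, because $E$ can have full projection while being $\mu$-null. Concretely, take $d=1$, $A = \{0\}$, $\e = 1$. Then $E = \{(-h,h) : |h| \le 1\}$ is a line segment with $\mu(E) = 0$, so $C_n = \emptyset$ is a valid inner approximation, yet $\pi_1(E) = [-1,1]$ has positive $\nu$-measure for $\nu$ Lebesgue. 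The dual outer approximation has the symmetric failure: for $\bx \in (A^\e)^C$ you only get $\lambda_K$-a.e.\ $\bh$ with $(\bx,\bh) \in \bigcup_m D_m$, not all $\bh$, so $\{\bx\} \times K \subseteq D_m$ need not hold. You correctly flag this as ``the principal obstacle,'' but the compactness of $K$ and properness of $\pi_1$ do not by themselves supply the missing $\nu$-null control; they only ensure that the projected approximants are Borel, not that they are close.

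The paper resolves the projection step by a different route: it invokes the Purves--Darst--Grzegorek theorem (Theorem~\ref{th:Darst-Grz}), which states that continuous images of universally measurable subsets of an \emph{absolute measurable} space are universally measurable. Since $\Rset^d \times \Rset^d$ is $\sigma$-compact, Lemma~\ref{lemma:sigma_compact_abs} shows it is absolute measurable, and then the continuous projection $\Pi_1$ carries $\sU(\Rset^{2d})$ into $\sU(\Rset^d)$ directly. This is a descriptive-set-theory result at the level of analytic sets and capacitability, not something one can recover from inner regularity of a single product measure; the compactness of the fiber $K$ that you emphasize plays no role in the paper's argument.
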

 Specifically, Theorem \ref{th:A^e_univ_meas} allows us to define the adversarial risk in Equation \eqref{eq:adversarial_loss}. Appendix~\ref{app:meas_metric} proves a similar measurability theorem for metric spaces. Recall that for a probability measure $\QQ$, by definition $\sU(\Rset^d)\subset \cL_\QQ(\Rset^d)$. Therefore, if $A\in \sU(\Rset^d)$, then $A^\e$ is measurable with respect to $(\overline \QQ, \Rset^d, \cL_\QQ(\Rset^d))$. However, as this only holds for $A\in \sU(\Rset^d)$ and not all of $\cL_\QQ(\Rset^d)$, throughout this paper, \emph{we implicitly assume that our measure space is $(\overline \QQ, \Rset^d, \sU(\Rset^d))$}. In other words, we assume that the probability measure $\PP$ is a complete measure $\overline \QQ$ restricted to the $\sigma$-algebra $\sU(\Rset^d)$. As $\sU(\Rset^d)$ is closed under the $\empty^\e,\empty^{-\e}$ operations, this convention allows us to mostly ignore measurability considerations.
 
  Results similar to Theorem~\ref{th:A^e_univ_meas} appear in the literature, but are inadequate for our construction. For instance, Proposition~7.36 of \citet{BertsekasShreve96} implies that if $A$ is Borel measurable, then $A^\e$ is universally measurable (See Appendix~\ref{app:meas_metric} for more details). However, as discussed earlier in this section, this result does not suffice because we need to show that for a $\sigma$-algebra $\Sigma$, $A\in \Sigma$ implies that $A^\e\in \Sigma$ as well. However, as we detail in Appendix~\ref{app:meas_metric}, this approach shows that for an arbitrary metric space, one can still define the adversarial risk $R^\e$.

\section{Alternative Models of Perturbations}\label{sec:alt_perturb}
 In this paper, we developed techniques for proving the existence of the adversarial Bayes classifier on $\Rset^d$ with additive perturbations. Our techniques could be applied to other natural models of attacks. 
 In Appendix~\ref{app:generalize_result}, we state a general theorem that summarizes the part of our theory that is applicable beyond additive perturbations. We discuss three notable examples.
 \begin{example}[Elementwise Scaling]
 \label{ex:small_multipication}
For $\bx\in \Rset^d$, we perturb each coordinate by multiplying it by a number in $[1-\e,1+\e]$. Thus, to perturb $\bx$, we multiply it elementwise by another vector in $B_\e^\infty(\mathbf 1)$.
\end{example}
 
 \citep{engstrom2019exploring} studied the following perturbation empirically in image classification tasks.
 
 \begin{example}[Rotations]
 \label{ex:small_rotations}
Let $\bx\in \Rset^d$. We perturb $\bx$ by multiplying it by a ``small" rotation matrix $\bR$.
We define our perturbation set this time as the set of matrices with 
\[
B = \set[\Big]{\bR\colon \sup_{\|\bx\|_2=1} \bx\cdot \bR\bx\geq 1-\e} .
\]
\end{example}
Our final example is inspired from applications in natural language processing \citep{ebrahimi2018}.
\begin{example}[Discrete Perturbations]
\label{ex:NLP}
Let $\cA$ be an alphabet. For an input string $x$, consider perturbations that replace a character of $x$ at a given index with another character in $\cA$.
\end{example}
The above perturbation models have a lot in common with additive perturbations in $\Rset^d$. All three are examples of \emph{semigroup actions}, and in fact the first two are group actions. Furthermore, all three involve metric spaces. Lastly, denoting a perturbed set as $A^\e$, we still have the containments in \eqref{eq:union_prop_2}.


Many aspects of the theory developed in this work are applicable in more general scenarios. 
In Appendix~\ref{app:motivating_example}, we prove the existence of the adversarial Bayes classifier for a simpler version of Example~\ref{ex:NLP} using the techniques we developed in this paper. Proving the existence of the adversarial Bayes classifier for the other two examples remains an open problem.

 Note that the proof of Theorem~\ref{th:existence_basic} only depends on Lemmas~\ref{lemma:set_limit_2},~\ref{lemma:limsup_liminf_e_commute}, and~\ref{lemma:to_pcr_set}, and not on the properties of $\Rset^d$. Thus in order to generalize our main theorem, one needs to generalize the three lemmas. 
Lemmas~\ref{lemma:limsup_liminf_e_commute} and~\ref{lemma:to_pcr_set} follow directly from the containments in \eqref{eq:union_prop_2}.

Thus it remains to generalize both the measurability considerations and Lemma~\ref{lemma:set_limit_2} on a case-by-case basis. Regarding measurability, we prove a statement similar to Theorem~\ref{th:A^e_univ_meas} in Appendix~\ref{app:measurability} (Theorem~\ref{th:direct_sum_univ_meas_metric}) which applies to perturbations given by a metric ball in a metric space. Specifically, this theorem states that if $A$ is Borel in a metric space, then $A^\e$ is universally measurable. Lastly, our tools may be useful for proving Lemma~\ref{lemma:set_limit_2} in other scenarios.  
\section{Conclusion}
\label{sec:conclusions}
 We initiated the study of fundamental questions regarding the existence of adversarial Bayes optimal classifiers. We provided sufficient conditions that ensure the existence of such classifiers when perturbing by an $\e$-ball. More importantly, our work highlights the need for new tools to understand Bayes optimality under adversarial perturbations, as one cannot simply rely on constructing pointwise optimal classifiers. Our paper also introduces several theorems which could be useful tools in further theoretical work.
 
 Similar to the case of standard loss functions, the most interesting extension of our work is to formulate and study questions related to the consistency of surrogate loss functions for adversarial robustness. We hope that this line of study will lead to new practically useful surrogate losses for designing adversarially robust classifiers.

\newpage
\section*{Acknowledgements}

    This work was partly funded by NSF CCF-1535987 and NSF IIS-1618662. Natalie Frank was supported in part by the Research Training Group in Modeling and Simulation funded by the National Science Foundation via grant RTG/DMS – 1646339. The authors would like to thank Laurent Meunier for pointing out a missing step 
    in the proof of Theorem~\ref{th:existence_basic} in the earlier version of the manuscript, Ryan Murray for finding an error in one of our proofs. The authors would also like to thank Professor James A. Morrow for a helpful citation \citep{nishiura2010} on measurability. 

\bibliographystyle{abbrvnat}
\bibliography{bibliography.bib,bib2.bib,bib3.bib,bib4.bib}
 
\appendix
\onecolumn

\renewcommand{\contentsname}{Contents of Appendix}
\tableofcontents
\addtocontents{toc}{\protect\setcounter{tocdepth}{3}} 
\clearpage

\section{The Measurability of $A^\e$}\label{app:measurability}
 In this section, we prove two versions of Theorem~\ref{th:A^e_univ_meas}. The first applies to general metric spaces and the second to abstract vector spaces. We discuss the theorem in high generality for two reasons. First, discussing this result in terms of abstract concepts actually clarifies main idea underlying these results. In fact, the proof of the statement we show for metric spaces is simpler than the one we show for vector spaces. Second, we suspect that our framework will be useful in discussing other models of perturbations.
 
 Throughout this section, we denote elements of the vector space $\Rset^d$ in bold ($\bx$) and elements of a general metric space $X$ as non-bold ($x$).

\subsection{Measurability for Metric Spaces}\label{app:meas_metric}
For a measure space $(\nu, X,\cB(X))$ equipped with the Borel $\sigma$-algebra $\cB(X)$, we will denote its completion as $(\overline \nu, X, \cL_\nu(X))$.
 Furthermore, throughout this section, we assume that $X$ is a metric space and that the Borel $\sigma$-algebra $\cB(X)$ is generated by the sets open in the metric on $X$. 

Recall that in Section~\ref{sec:main}, we defined $A^\e$ as $A^\e=A\oplus \overline{B_\e(\zero)}$. Another way to write this relation is 
    \begin{equation}\label{eq:A^eps_def_union}
    A^\e=\bigcup_{\ba\in A}\overline{B_\e(\ba)}.
    \end{equation}
    This form for $A^\e$ is helpful because it allows us to define $A^\e$ for general metric spaces. Notably, one can define the adversairal risk in a general metric space as
    \begin{align*}
        &R^\e(A) = \int(1 - \eta(x)) \one_{A^\e}(x) + \eta(\bx) \one_{(A^C)^\e}(x) \, d\PP\\
        &=\int(1 - \eta(x)) \sup_{a\in A}\one_{\ov{B_\e(a)}}(x) + \eta(x) \sup_{a\in A^C}\one_{\ov{B_\e(a)}}(\bx) \, d\PP
    \end{align*}
    holds for general metric spaces when we define $A^\e$ as in \eqref{eq:A^eps_def_union}. On $\Rset^d$, the second line is equivalent to the expression \eqref{eq:bayes_adv_objective_sup}. We will use this observation later to prove a generalized version of our theorem for alternative models of perturbations.

We start by defining the universal $\sigma$-algebra for a measure space $X$.
\begin{definition}
Let $X$ be a Borel space and let $\sM(X)$ be the set of all finite positive Borel measures on $X$. We define the \emph{universal $\sigma$-algebra} to be 
\begin{equation}\label{eq:universal_sigma_algebra_def}
    \sU(X)=\bigcap_{\nu\in\sM(X)} \cL_\nu(X).    
\end{equation}
If $A\in \sU(X)$, then we say that $A$ is \emph{universally measurable}. \footnote{Alternatively, one could compute the intersection in \eqref{eq:universal_sigma_algebra_def} over all $\sigma$-finite measures. These two approaches are equivalent because for every $\sigma$-finite measure $\lambda$ and compact set $K$, the restriction $\lambda \mres K$ is a finite measure with $\cL_{\lambda\mres K} (X)\supset \cL_\lambda(X)$. See Theorem~1.5 of and Proposition~2.5 \citep{nishiura2010}.}
\end{definition}


In this section we prove the following theorem:
\begin{theorem}\label{th:direct_sum_univ_meas_metric}\footnote{The original version of this paper stated that in an arbitrary metric space, if $A$ is universally measurable, then $A^\e$ is universally measurable as well, which was an error.}
Let $(X,d)$ be complete separable metric space. Define $A^\e$ as in \eqref{eq:A^eps_def_union}.
If $A\subset X$ is Borel measurable, then $A^\e$ is universally measurable.
\end{theorem}
That the metric on our space $X$ generates the topology on $X$ which in turn generates $\cB(X)$ is implicit in this theorem statement.

This subtlety is crucial when applying Theorem~\ref{th:direct_sum_univ_meas_metric}. For norms $\Rset^d$ however, the situation simplifies-- all norms generate the standard topology. In contrast, a general seminorm does not generate the standard topology on $\Rset^d$, so Theorem~\ref{th:direct_sum_univ_meas_metric} in this case would not apply to $\Rset^d$ with the usual Borel $\sigma$-algebra.

We now describe the basic idea behind the proof of Theorem~\ref{th:direct_sum_univ_meas_metric}. Consider a Borel set $A\subset X$. Then $X\times A$ is Borel in $X\times X$. The set $\Delta_\e=\{(x,y)\in X\times X\colon d(x,y)\leq \e\}$ is closed, and therefore Borel. Thus $X\times A \cap \Delta_\e$ is Borel in $X\times X$. Notice that $A^{\e}$ is the projection of this set onto the first coordinate.  
Such a projection is universally measurable.

In \citep{BertsekasShreve96}, Propositions~7.41 and the statement that $\sU(X)$ contains the analytic $\sigma$-algebra implies the following theorem:
\begin{theorem}\label{th:projection_univ_meas}
    Let $S$ be a Borel set in $X\times Y$ and let $\Pi_1\colon X\times Y\to X$ be projection onto the first coordinate: $\Pi_1(x,y)=x$. Then $\Pi_1(S)$ is universally measurable
\end{theorem}

In fact, this analysis implies that $A^\e$ is measurable with respect to a smaller $\sigma$-algebra called the \emph{analytic $\sigma$-algebra}. See Chapter~7 of \citep{BertsekasShreve96} for details. We formally perform this calculation below.

\begin{proof}[Proof of Theorem~\ref{th:direct_sum_univ_meas_metric}]
Let $\Delta_\e=\{(x,y)\colon d(x,y)\leq \e\}$.
We will show that $A^\e=\Pi_1(X\times A \cap \Delta_\e)$. Theorem~\ref{th:projection_univ_meas} will then imply the result. 
\begin{align*}
    \Pi_1(X\times A\cap \Delta_\e)&=\{\bx\colon \text{for some }\ba\in A, (\bx,\ba)\in \Delta_\e\}=\{\bx\colon \text{for some }\ba\in A, d(\ba,\bx)\leq \e\}\\
    &=\bigcup_{\ba\in A}\ov{B_\e(\ba)}=A^\e
\end{align*}

\end{proof}

\subsection{Measurability for Vector Spaces}\label{app:meas_vector}



In this section we show the following measurability result:
\begin{theorem}\label{th:direct_sum_univ_meas}
Let $(X,\|\cdot\|)$ be a separable vector space. Define $A^\e$ as $A^\e=A\oplus \ov{B_\e(\zero)}$, where $B_\e$ is an $\e$-ball in the norm $\|\cdot\|$.
If $A\subset X$ is universally measurable, then $A^\e$ is universally measurable as well.
\end{theorem}
As $\Rset^d$ with the standard topology is separable and all norms on $\Rset^d$ generate the standard topology, Theorem~\ref{th:A^e_univ_meas} immediately follows from Theorem~\ref{th:direct_sum_univ_meas}.

Again, that the norm on our space $X$ generates the topology on $X$ which in turn generates $\cB(X)$ is implicit in this theorem statement.

Before proving Theorem~\ref{th:direct_sum_univ_meas}, we define another useful concept.
\begin{definition}
Let $X,Y$ be a separable metric spaces and let $(\overline \nu, Y,\cL_\nu(Y))$ be a complete $\sigma$-finite measure space. Then $X$ is \emph{absolute measurable} if for every injective continuous map $h\colon X\to Y$, $h(X)$ is an element of $\cL_\nu(Y)$. 
\end{definition}

This definition is useful due to the following theorem:
\begin{theorem}\label{th:Darst-Grz}
Let $X$ be an absolute measurable Borel space and $Y$ a separable metrizable space. Let $f\colon X\to Y$ be a homeomorphism. 

Then 
\[f[\sU(X)]\subset \sU(Y).\]
\end{theorem}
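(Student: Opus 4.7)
The plan is to promote the problem to one between Polish (separable, completely metrizable) spaces, where continuous images of universally measurable sets are known to be universally measurable via classical descriptive-set-theoretic arguments. First, I would embed $Z$ topologically into the Hilbert cube $H_Z = [0,1]^{\Nset}$ (possible since $Z$ is separable metric) and similarly embed $Y$ into $H_Y = [0,1]^{\Nset}$; both ambient spaces are Polish. The hypothesis that $Z$ is absolute measurable says exactly that, under this embedding, $Z \in \sU(H_Z)$. Combining this with $E \in \sU(Z)$ via the standard compatibility of the universal $\sigma$-algebra between a space and an ambient separable metric space promotes $E$ to an element of $\sU(H_Z)$. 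I then view $f$ as a continuous map $f \colon Z \to H_Y$.

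Second, I would extend $f$ continuously to a Polish domain. By Lavrentiev's theorem (equivalently Kuratowski's continuous-extension theorem), any continuous map from a subset of a metric space into a complete metric space extends continuously to a $G_\delta$ superset. Hence $f$ extends to a continuous $\tilde f \colon G \to H_Y$ for some $G_\delta$ set $Z \subseteq G \subseteq H_Z$; crucially $G$ is Polish (being $G_\delta$ in a Polish space), and $E \in \sU(G)$ since $E \subseteq Z \subseteq G$. Now I invoke the classical fact that for a continuous map between Polish spaces the image of a universally measurable set is universally measurable: applying this to $\tilde f$ gives $f(E) = \tilde f(E) \in \sU(H_Y)$. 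Since $f(E) \subseteq Y \subseteq H_Y$, a standard restriction property of $\sU$ finally yields $f(E) \in \sU(Y)$, which is the desired conclusion.

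The main obstacle is justifying the Polish-space step invoked above. To prove it, one fixes an arbitrary $\sigma$-finite Borel measure $\nu$ on $H_Y$ and seeks a Borel sandwich of $\tilde f(E)$ with equal $\nu$-measure. The key underlying ingredient is that in a Polish space the continuous image of a Borel set is analytic, and analytic sets are universally measurable, so $\nu(\tilde f(B))$ is well defined for every Borel $B \subseteq G$. This lets one pull $\nu$ back to a finite Borel measure $\mu$ on $G$, apply $E \in \sU(G) \subseteq \cL_\mu(G)$ to sandwich $E$ between Borel sets of equal $\mu$-measure in $G$, and push forward to sandwich $\tilde f(E)$ between analytic, hence $\nu$-measurable, sets in $H_Y$. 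The delicate point is that $\tilde f$ is not assumed injective, so images of disjoint sets may overlap; the analytic-set framework, together with Choquet's capacitability theorem for Souslin operations, is what resolves this cleanly.
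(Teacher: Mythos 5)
There is a genuine gap, and it sits exactly at the step you call a ``classical fact'': it is \emph{not} a theorem of ZFC that a continuous map between Polish spaces carries universally measurable sets to universally measurable sets. Coanalytic sets are universally measurable (analytic sets are, by Lusin/Choquet, and $\sU$ is closed under complements), and every $\Sigma^1_2$ set is a continuous image (a projection) of a coanalytic subset of a Polish space; under $V=L$ there exist non-Lebesgue-measurable $\Sigma^1_2$ sets. So there is, consistently, a universally measurable subset of a Polish space whose continuous image is not even Lebesgue measurable. Your reduction to the Polish case via the Hilbert-cube embedding and the Lavrentiev extension is fine, but it reduces the theorem to a statement that cannot be proved. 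The attempted repair in your last paragraph does not close the hole: the set function $B\mapsto \nu^*(\tilde f(B))$ is a Choquet capacity, not a measure, precisely because $\tilde f$ is not injective, and Choquet's capacitability theorem only guarantees capacitability of \emph{Souslin (analytic)} sets with respect to that capacity --- an arbitrary universally measurable $E$ need not be capacitable, so you cannot produce Borel $B_1\subset E\subset B_2$ with $\nu^*\bigl(\tilde f(B_2\setminus B_1)\bigr)=0$. Any argument that worked for all universally measurable $E$ would contradict the counterexample above.

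For comparison, the paper does not attempt a proof at all: it cites the Purves--Darst--Grzegorek theorem from \citet{nishiura2010}. In that theorem the implication one wants is one of several \emph{equivalent} conditions, and the equivalence is anchored by a fiber-countability condition on $f$ (Purves's ``B-map'' condition, that $\{y: f^{-1}(y)\text{ is uncountable}\}$ is countable); this is exactly the hypothesis that rules out the coanalytic counterexample, and it is not implied by continuity alone. So the honest state of affairs is that the statement, as written for arbitrary continuous $f$, needs an additional hypothesis (e.g.\ that the relevant preimage set is analytic/Souslin, as it is in the paper's actual application where one projects the intersection of a closed set with a product $X\times A$), and your proof strategy would have to be rebuilt around analyticity of the set being pushed forward rather than around its universal measurability.
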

This theorem is the implication (1) $\Rightarrow$ (4) of the Purves-Darst-Grzegorek Theorem, stated on page 33 in Chapter~2.1 of \citep{nishiura2010}. A separable vector space is $\sigma$-compact. This fact implies that that Theorem~\ref{th:Darst-Grz} applies.

\begin{lemma}\label{lemma:sigma_compact_abs}
A $\sigma$-compact space is absolute measurable.
\end{lemma}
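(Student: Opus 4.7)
The plan is to unwind the definition of absolute measurability and then exploit the topological fact that a homeomorphism sends compact sets to compact sets. Recall a space $X$ is absolute measurable if, for every separable metric space $Y$ carrying a complete $\sigma$-finite Borel measure $(\overline{\mu}, Y, \cL_\mu(Y))$, every topological copy of $X$ sitting inside $Y$ lies in $\cL_\mu(Y)$. Since $\cB(Y) \subset \cL_\mu(Y)$ for every such $\mu$, it suffices to show that every topological copy of $X$ in $Y$ is in fact a Borel subset of $Y$ — an even stronger conclusion.

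First I would fix a $\sigma$-compact space $X$ and write $X = \bigcup_{n=1}^\infty K_n$ with each $K_n$ compact. Next, let $X' \subseteq Y$ be a topological copy of $X$, realized by a homeomorphism $h \colon X \to X'$. The key observation is that $h$ is in particular continuous as a map into $Y$, so $h(K_n)$ is a compact subset of $Y$. Because $Y$ is a metric space, it is Hausdorff, so every compact subset of $Y$ is closed, and therefore $h(K_n) \in \cB(Y)$. Taking a countable union gives
\[
X' \;=\; \bigcup_{n=1}^\infty h(K_n) \;\in\; \cB(Y) \;\subset\; \cL_\mu(Y),
\]
which is what the definition demands.

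There is essentially no obstacle in this argument beyond being careful about which direction of the homeomorphism one uses: only the continuity of $h$ in the forward direction is needed, which is why the conclusion holds for any topological embedding of $X$ into $Y$, regardless of how badly behaved $h^{-1}$ might look from $Y$'s perspective. The only subtle point worth flagging in the write-up is that the ambient space $Y$ is assumed metric (hence Hausdorff), which is what lets us upgrade ``compact'' to ``closed'' and therefore to ``Borel''; this assumption is built into the definition of absolute measurable space used in the paper, so nothing extra needs to be verified.
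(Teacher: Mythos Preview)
Your proof is correct and follows essentially the same approach as the paper: write $X$ as a countable union of compact pieces, push each piece forward under the homeomorphism, observe that compact subsets of a metric space are closed (hence Borel), and conclude by countable union. The paper's proof is slightly less explicit about the Hausdorff point and about only needing forward continuity, but the argument is the same.
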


We now describe the basic idea behind the proof of Theorem~\ref{th:direct_sum_univ_meas} for $\Rset^d$. Consider the homeomorphism $w\colon \ov{B_\e(\zero)}\times \Rset^d\to \ov{B_\e(\zero)}\times \Rset^d $ given by $w(\bv,\bx)=(\bv,\bx+\bv)$. Then for any set $A$, $w(\ov{B_\e(\zero)}, A)= \ov{B_\e(\zero)}\times A^\e$. Therefore, if $\ov{B_\e(\zero)}$ is universally measurable in $\ov{B_\e(\zero)}\times \Rset^d$, then $\ov{B_\e(\zero)} \times A^\e$ is also universally measurable. To conclude that $A^\e$ is universally measurable in $\Rset^d$, it remains to show that that $\ov{B_\e(\zero)}\times S$ is universally measurable in $\ov{B_\e(\zero)}\times \Rset^d $ iff $S$ is universally measurable in $\Rset^d$.

\begin{lemma}\label{lemma:prod_univ_meas}
Let $X,Y$ be Borel spaces. If $S\in \sU(Y)$, then $X\times S\in \sU(X\times Y)$.
\end{lemma}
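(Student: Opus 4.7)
The plan is to verify, for every $\mu\in\sM(X\times Y)$, that $X\times S$ lies in $\cL_\mu(X\times Y)$; since $\sU(X\times Y)$ is defined as the intersection of these completions, this is exactly what is needed. First I would reduce to the case of a finite measure: if $\mu$ is $\sigma$-finite, write $X\times Y=\bigsqcup_n E_n$ with $\mu(E_n)<\infty$ and consider the finite restrictions $\mu_n(\,\cdot\,)=\mu(\,\cdot\,\cap E_n)$; a set that lies in $\cL_{\mu_n}$ for every $n$ lies in $\cL_\mu$ by a routine countable-union argument, so it suffices to handle a finite Borel $\mu$.

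Next, for such a finite Borel measure $\mu$ on $X\times Y$, I would push it forward along the projection $\pi\colon X\times Y\to Y$, $\pi(x,y)=y$, and define the Borel measure $\nu$ on $Y$ by $\nu(B)=\mu(X\times B)$ for $B\in\cB(Y)$. Since $\pi$ is continuous (hence Borel measurable), $\nu$ is a well-defined finite Borel measure on $Y$, and in particular $\nu\in\sM(Y)$. Because $S\in\sU(Y)\subset\cL_\nu(Y)$, there exist Borel sets $B_1,B_2\in\cB(Y)$ with $B_1\subset S\subset B_2$ and $\nu(B_2\setminus B_1)=0$. The rectangles $X\times B_1$ and $X\times B_2$ are Borel in $X\times Y$ (as preimages of Borel sets under the continuous map $\pi$), they sandwich $X\times S$, and
\[
\mu\bigl((X\times B_2)\setminus(X\times B_1)\bigr)
=\mu\bigl(X\times(B_2\setminus B_1)\bigr)
=\nu(B_2\setminus B_1)=0,
\]
so $X\times S\in\cL_\mu(X\times Y)$, as desired.

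The argument is essentially a clean pushforward computation; the only bookkeeping is to check that $\nu$ is a finite Borel measure on $Y$ and that rectangles $X\times B$ with $B\in\cB(Y)$ remain Borel in $X\times Y$, both of which follow from Borel measurability of $\pi$ under the standing assumption that $X$ and $Y$ are Borel spaces. I therefore do not expect a genuine obstacle; the only delicate step is the reduction from $\sigma$-finite to finite measures, which is routine but worth carrying out carefully to ensure the approximating Borel sandwich can be chosen globally rather than piece by piece.
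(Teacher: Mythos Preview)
Your proposal is correct and follows essentially the same pushforward argument as the paper: fix a measure on $X\times Y$, take its marginal on $Y$, use that $S$ lies in the completion with respect to that marginal, and lift the Borel approximation back to $X\times Y$. Your explicit reduction to finite measures is a worthwhile precaution (it guarantees the marginal is finite and hence in $\sM(Y)$, a point the paper glosses over); beyond that, the two arguments differ only in using the Borel-sandwich versus Borel-plus-null-subset description of the completion, which are equivalent.
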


\begin{lemma}\label{lemma:prod_univ_meas_converse}
	Let $X,Y$ be second countable, locally compact Hausdorff spaces. Assume that $X$ is compact and $Y$ is $\sigma$-compact. If $X\times S\in \sU(X\times Y)$, then $S\in \sU(Y)$.
\end{lemma}

We prove Theorem~\ref{th:direct_sum_univ_meas} using these results.
\begin{proof}[Proof of Theorem~\ref{th:direct_sum_univ_meas}]
 
	Consider the function $w\colon \ov{B_\e(\zero)}\times X \to \ov{B_\e(\zero)}\times X$ given by $w(\bh,\bx)=(\bh,\bx+\bh)$. Then $w$ is continuous and invertible, so it is a homeomorphism. Furthermore, it maps the set $ \ov{B_\e(\zero)}\times A$ to $ \ov{B_\e(\zero)}\times (A\oplus \ov{B_\e(\zero)})$. 
	
	Let $A$ be a universally measurable subset of $X$. Then by Lemma~\ref{lemma:prod_univ_meas}, $\ov{B_\e(\zero)}\times A$ is universally measurable in $\ov{B_\e(\zero)}\times X$. Thus, Theorem~\ref{th:Darst-Grz} implies that $w(\ov{B_\e(\zero)},A)=\ov{B_\e(\zero)}\times A^\e $ is universally measurable in $\ov{B_\e(\zero)}\times X$. Lastly, Lemma~\ref{lemma:prod_univ_meas_converse} implies that $A^\e$ is measurable.

\end{proof}
\subsection{Proofs of Lemmas~\ref{lemma:sigma_compact_abs},~\ref{lemma:prod_univ_meas},~and~\ref{lemma:prod_univ_meas_converse}}
\begin{replemma}{lemma:sigma_compact_abs}
A $\sigma$-compact space is absolute measurable.
\end{replemma}
\begin{proof}[Proof of Lemma~\ref{lemma:sigma_compact_abs}]
We will start by showing that a compact space is absolute measurable. Let $H$ be a compact topological space and let $Y$ be a separable metric space. If $f\colon H\to Y$ is continuous, a well-known theorem from topology implies that $f(H)$ is compact as well. A compact subset of a metric space is always closed, and therefore $f(H)$ is a Borel set.

Next, consider a $\sigma$-compact space $X$. Write 
\[X=\bigcup_{n\in \Nset} H_n\] where each $H_n$ is compact. Then if $f\colon X\to Y$ is a continuous map, then $f(X)$ is a countable union of Borel sets; 
\[f(X)=\bigcup_{n\in \Nset} f(H_n)\]
and is therefore Borel as well.
\end{proof}

\begin{replemma}{lemma:prod_univ_meas}
Let $X,Y$ be Borel spaces. If $S\in \sU(Y)$, then $X\times S\in \sU(X\times Y)$.
\end{replemma}
\begin{proof}[Proof of Lemma~\ref{lemma:prod_univ_meas}]
Let $(\nu,X\times Y, \cB(X\times Y))$ be an arbitrary finite Borel measure on $X\times Y$ and let $S\in \sU(Y)$. We will show that $X\times S\in \cL_\nu(X\times Y)$. As the universal $\sigma$-algebra is the intersection of all $\cL_\nu(X\times Y)$ for all finite Borel measures $\nu$, this inclusion will imply that $X\times S\in \sU(X\times Y)$.

Let $\lambda$ be the marginal distribution on $Y$ given by $\lambda(B)=\nu(X\times B)$ with $\sigma$-algebra $\cB(Y)$. Now consider the completion $(\overline \lambda, Y, \cL_\lambda(Y))$. Because $S$ is in the universal $\sigma$-algebra for $Y$, we know that $S\in \cL_\lambda(Y)$. Therefore, $S=B\cup N'$ where $B$ is a Borel set and $N'$ is a subset of a null Borel set $N$. Because $N$ is Borel, $X\times N$ is as well and $\nu(X\times N)=\lambda(N)=0$. Therefore, $X\times N$ is a null Borel set for the measure space $(\nu,X\times Y, \cB(X\times Y))$. Thus both $X\times N'$ and $X\times B$ are in the complete measure space $(\overline \nu, X\times Y, \cL_\nu(X\times Y))$. Therefore, $X\times S=X\times B\cup X\times N'$ is in $\cL_\nu(X\times Y)$ as well. 
\end{proof}

However, to prove the converse to Lemma~\ref{lemma:prod_univ_meas}, one must apply the concept of regularity of measures.

 \begin{definition}\label{def:inner_regular}
Let $\tau$ be a topology on a set $X$ and $\cB(X)$ the Borel $\sigma$-algebra generated by $\tau$. Let $\PP$ be a Borel measure on $(X,\cB(X))$.Then $\PP$ is \emph{inner regular} if for all measurable sets $E$,
$\PP(E) = \sup \set{\PP(K)\colon K\subset E, K\text{ compact}}$.
A space is \emph{regular} if all finite Borel measures on $X$ are inner regular.
\end{definition}

Theorem~7.8 of \citep{folland} implies that most measure spaces encountered in applications are regular:

 \begin{theorem}\label{th:Radon}
Let $X$ be a second-countable and locally compact Haudsorff space. Then every finite Borel measure is inner regular. 
\end{theorem}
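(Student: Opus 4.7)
The plan is to reduce the statement to Theorem~7.8 and Proposition~7.5 of \citep{folland}, so the substantive task is verifying their hypotheses in our second-countable, locally compact Hausdorff (LCH) setting. Recall that Theorem~7.8 concludes that a Borel measure finite on compact sets, living on an LCH space in which every open set is $\sigma$-compact, is a Radon measure (finite on compact sets, outer regular on all Borel sets, and inner regular on all open sets). Proposition~7.5(b) then upgrades inner regularity by compact sets from open sets to every $\sigma$-finite Borel set.

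The first substantive step is the topological lemma that in a second-countable LCH space $X$, every open subset is $\sigma$-compact. I would argue as follows: open subspaces of LCH spaces are themselves LCH (using the regularity of LCH spaces to shrink any open neighborhood of a point to one with compact closure contained in it), and subspaces of second-countable spaces remain second-countable. Hence any open $U \subseteq X$ is again a second-countable LCH space. In such a space one can refine the countable base to a countable base $\{V_n\}$ of relatively compact open sets: every point admits a compact neighborhood, and any basis element lying inside such a neighborhood has compact closure. Then $U = \bigcup_n V_n \subseteq \bigcup_n \overline{V_n} \subseteq U$, exhibiting $U$ as a countable union of compact sets.

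With this topological fact in hand, Folland's Theorem~7.8 applies directly: since $\nu(X) < \infty$, the measure $\nu$ is finite on compact sets, so it is Radon. The same finiteness makes every Borel set $\sigma$-finite, so Proposition~7.5(b) of \citep{folland} gives inner regularity by compact sets on every Borel set, which is exactly the conclusion required by Definition~\ref{def:inner_regular}.

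The only genuine obstacle is the topological lemma on $\sigma$-compactness of every open set; the measure-theoretic content is entirely carried by the cited Folland results, with the finiteness hypothesis on $\nu$ entering only to supply the $\sigma$-finiteness used in the final step. This explains why the author's remark that the theorem is ``a consequence of Theorem~7.8 and Proposition~7.5 of \citep{folland}'' is accurate once the second-countability and local compactness are used to produce the needed $\sigma$-compactness of opens.
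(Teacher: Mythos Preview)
Your proposal is correct and matches the paper's approach exactly: the paper does not give an independent proof but simply states that the result ``is a consequence of Theorem~7.8 and Proposition~7.5 of \citep{folland}.'' You have correctly identified and verified the one nontrivial hypothesis check---that second-countability together with local compactness forces every open set to be $\sigma$-compact---which is precisely what is needed to feed into Folland's Theorem~7.8, after which Proposition~7.5 upgrades inner regularity to all Borel sets using $\nu(X)<\infty$.
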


The notion of regularity extends to complete measures. 
\begin{lemma}\label{lemma:regularity_complete}
    Let $\nu$ be a finite positive Borel measure on a regular space $X$ and let $\ov{\nu}$ be the completion of $\nu$. Let $A\in \cL_\nu(X)$. Then 
    \[\ov \nu(A)=\sup_{\substack{K\subset A\\ K\text{ compact}}} \nu(K).\]
\end{lemma}
\begin{proof}
    If $A\in \cL_\nu(X)$, then there is a Borel set $B$ with $B\subset A$ and $\ov\nu(A)=\nu(B)$. The result then follows from the definition of inner regularity for Borel measures.
\end{proof}

\begin{replemma}{lemma:prod_univ_meas_converse}
	Let $X,Y$ be second countable, locally compact Hausdorff spaces. Assume that $X$ is compact and $Y$ is $\sigma$-compact. If $X\times S\in \sU(X\times Y)$, then $S\in \sU(Y)$.
\end{replemma}
\begin{proof}[Proof of Lemma~\ref{lemma:prod_univ_meas_converse}]
	As $X,Y$ are both $\sigma$-compact spaces, Theorem~\ref{th:Radon} implies that $X\times Y$ is regular. Fix a Borel probability measure $\lambda$ on $X$, and let $\nu$ be any finite Borel measure on $Y$. Then $\lambda\times \nu$ is a Borel probability measure on $X\times Y$, so it is inner regular. Let $\ov{\lambda\times \nu}$ be the completion of $\lambda\times \nu$. Then
	\[\ov{\lambda\times \nu} (X\times S)= \sup_{\substack{ K \text{ compact}\\ K\subset X\times S}} \lambda\times \nu(K) \]
	We will now argue that 
	\begin{equation}\label{eq:compact_to_product}
		\sup_{\substack{ K \text{ compact}\\ K\subset X\times S}} \lambda\times \nu(K)=\sup_{\substack{ K \text{ compact}\\ K\subset  S}} \nu(K)
	\end{equation}
	Let $K\subset X\times S$ and let $\Pi_2\colon X\times X\to X$ be projection onto the second coordinate. Because the continuous image of a compact set is compact, $K'=\Pi_2(K)$ is compact and contained in $S$. Thus $X\times S\supset X\times K'\supset X\times K$, which implies \eqref{eq:compact_to_product}. Now \eqref{eq:compact_to_product} applied to $S^C$ implies that 
	\[\ov{\lambda\times \nu}(X\times S)=\inf_{\substack{ U^C \text{ compact}\\ U\supset X\times S}} \lambda\times \nu(U) =\inf_{\substack{ U^C \text{ compact}\\ U\supset S}}  \nu(U)\]
	
	Thus 
	\[\sup_{\substack{ K \text{ compact}\\ K\subset  S}} \nu(K)=\inf_{\substack{ U^C \text{ compact}\\ U\supset S}}  \nu(U):=m\]
	Let $K_n$ be a sequence of compact sets contained in $A$ for which $\lim_{n\to \infty} \nu(K_n)=m$ and $U_n$ a sequence of sets containing $A$ for which $U_n^C$ is compact and $\lim_{n\to \infty} \nu(U_n)=m$. Because a finite union of compact sets is compact, one can choose such sequences that satisfy $K_{n+1}\supset K_n$ and $U_{n+1}\subset U_n$. Then $D=\bigcup K_n$, $V=\bigcap U_n$ are Borel sets that satisfy $D\subset S\subset V$ and $\nu(D)=\nu(V)$, so $V-D $ a null set. Thus $S-D$ is a subset of the null Borel set $V-D$, so $S\in \cL(\Rset^d)$. As $\nu$ was arbitrary, it follows that $S$ is universally measurable.

\end{proof}
\section{Proof of Lemma~\ref{lemma:set_limit_2} }\label{app:set_limit_proof}
In the next subsection we show how Lemma~\ref{lemma:regularity_ball} implies Lemma~\ref{lemma:set_limit_2}. The proof of Lemma~\ref{lemma:regularity_ball} is delayed to~\ref{app:set_limit_lemmas}. 
\subsection{Main Argument}
The following lemma assists in understanding the convergence of pseudo-certifiably robust sets.

	\begin{lemma}\label{lemma:ball_open_union_set_lemma}
		Let $\{\ba_n\}$ be a sequence converging to $\ba$. Then if $\bx\in B_\e(\ba)$, then for sufficiently large $n$, $\bx \in B_\e(\ba_n)$.
	\end{lemma}
	\begin{proof}
		Set $r=\|\bx-\ba\|$. Then if we choose $n$ large enough so that $\|\ba-\ba_{n}\|< \e -r$, then 
		\[\|\bx-\ba_{n}\|\leq \|\bx-\ba\|+\|\ba-\ba_{n}\|< r+(\e-r)=\e\]
		Therefore, for sufficiently large $n$, $\bx \in B_\e(\ba_n)$.
	\end{proof}

    The following result is central to the proof of Lemma~\ref{lemma:set_limit_2}.
    \begin{replemma}{lemma:regularity_ball}
            Let $\mu$ be Lebesgue measure and let $S\subset \Rset^d$. 
    If for each $\bs\in \partial S$ there exists a ball $B_\e(\ba)$ with $B_\e(\ba)\subset S$ and $\bs\in \partial B_\e(\ba)$, then $\mu(\partial S)=0$.
\end{replemma}
    The next lemma shows that if $\tls A_n=\tli A_n$, then for every $\bx\in \liminf A_n^\e$ there is a ball for which $\bx \in \ov{B_\e(\ba)}$ but $B_\e(\ba)\subset \liminf A_n^\e$. Thus the set $\liminf A_n^\e$ satisfies the property required by Lemma~\ref{lemma:regularity_ball} at all points. It remains to show that this property also holds on the boundary $\partial \liminf A_n^\e$, which is later accomplished by taking limits.

    \begin{lemma}\label{lemma:liminf_A^e_well_behaved}
        Let $A_n$ be a sequence for which $\tli A_n=\tls A_n$. Then $\liminf A_n^\e$ has the following property: for every $\bx \in \liminf A_n^\e$, there is a ball $B_\e(\ba)$ for which $\bx \in \ov{B_\e(\ba)}$ and $B_\e(\ba)\subset \liminf A_n^\e$.
    \end{lemma}
    \begin{proof}
        Let $\bx \in \liminf A_n^\e$. The expression for the $\liminf$ in \eqref{eq:liminf_sequence_def} implies that there is a $J$ for which $\bx\in A_{n}^\e$ for all $n>N$. Hence one can write $\bx=\ba_{n}+\bh_{n}$ with $\ba_n\in A_n$ and $\bh_n\in \ov{B_\e(\zero)}$ for all $n>N$. Now pick a subsequence $n_{j}$ for which $\bh_{n_{j}}$ converges and set $\bh=\lim_{j\to\infty} \bh_{n_{j}}$. Then let 
    \begin{equation}\label{eq:ak_def}
        \ba=\lim_{j\to \infty} \ba_{n_{j}}= \bx-\bh.
    \end{equation}
    Due to the definition of $\tls$ in \eqref{eq:tls_def}, $\ba\in \tls A_{n}$ and by the assumption on our sequence $A_n$, $\tls A_n=\tli A_{n}$. Thus there is a sequence $\td \ba_{n}$ for which $\td \ba_{n}\in A_{n}$ 
    and $\lim_{n\to \infty} \td \ba_{n}=\ba$. Then $B_\e(\td \ba_{n})\subset A_{n}^\e$ and 
    Lemma~\ref{lemma:ball_open_union_set_lemma} then implies that $B_\e(\ba)\subset \liminf_j A_{n}^\e$. 
    
    Lastly, one can conclude that $\|\bx-\ba\|\leq \e$ from the definition of $\ba$ in \eqref{eq:ak_def}.
    \end{proof}

Finally, Lemma~\ref{lemma:set_limit_2} is a consequence of Lemma~\ref{lemma:regularity_ball}, Lemma~\ref{lemma:liminf_A^e_well_behaved}, and Theorem~\ref{th:set_limit_Rockafellar}.

\begin{replemma}{lemma:set_limit_2}
        Let $\QQ$ be a finite positive measure and assume that $\QQ$ is absolutely continuous with respect to Lebesgue measure. For any sequence of sets $A_n$, there is a sub-sequence $A_{n_j}$ for which 
    \[\limsup A_{n_j}^\e\dequal \liminf A_{n_j}^\e\]
\end{replemma}
\begin{proof}
    Let $\mu$ denote Lebesgue measure. We will find a subsequence $A_{n_j}$ of $A_n$ for which $\mu(\limsup A_{n_j}-\liminf A_{n_j})=0$. By Theorem~\ref{th:set_limit_Rockafellar}, one can find a subsequence $n_j$ for which $\tli A_{n_j}=\tls A_{n_j}$. Then for this subsequence, 
    Lemma~\ref{lemma:liminf_A^e_well_behaved} then applies to this subsequence. We will argue that $\liminf A_n^\e$ in fact satisfies the property of Lemma~\ref{lemma:regularity_ball}. 
    
     Let $\bx \in \partial \liminf A_{n_j}^\e$. We will find a ball $B_\e(\ba)\subset \liminf A_{n_j}^\e$ for which $\bx\in \ov{B_\e(\ba)}$. If $\bx \in \liminf A_{n_j}^\e$, then there is a sequence $\bx^k\in \liminf A_{n_j}^\e$ converging to $\bx$. 
    By Lemma~\ref{lemma:liminf_A^e_well_behaved}, for each $\bx^k$, there is a $\ba^k$ with $\bx\in \ov{B_\e(\ba^k)}$ and $B_\e(\ba^k)\subset \liminf A_{n_j}^\e$.
    Furthermore, because $\bx^k\to \bx $, the set $\{\ba^k\}$ is bounded. Let $k_m$ be a subsequence for which $\ba^{k_m}$ converges and set $\ba=\lim_{m\to\infty} \ba^{k_m}$. Then because $B_\e(\ba^{k_m})\subset \liminf A_{n_j}^\e$, Lemma~\ref{lemma:ball_open_union_set_lemma} implies that $B_\e(\ba)\subset \liminf A_{n_j}^\e$ as well. Next,
    \[\|\bx-\ba\|\leq \|\bx-\bx^{k_m}\|+\|\bx^{k_m}-\ba^{k_m}\|+\|\ba^{k_m}-\ba\|\leq \|\bx-\bx^{k_m}\|+\e+\|\ba^{k_m}-\ba\|\]
    As $\|\bx-\bx^{k_m}\|$ , $\|\ba-\ba^{k_m}\|$ both approach zero, it follows that $\bx\in \ov{B_\e(\ba)}$. Therefore, Lemma~\ref{lemma:regularity_ball} applies.
    
\end{proof}

\subsection{Proof of Lemma~\ref{lemma:regularity_ball}}\label{app:set_limit_lemmas}

To prove Lemma~\ref{lemma:regularity_ball} we take an approach that is standard in geometric measure theory. The strategy is to apply the Lebesgue differentiation theorem. 
\begin{theorem}[Lebesgue Differentiation Theorem]
Assume that $f\colon\Rset^d\to \Rset$ is bounded. Then the following holds for $\bx$ $\mu$-a.e.:

\[\lim_{r\to 0}\frac 1 {\mu(B_r^2(\bx))} \int_{B_r^2(\bx)} f d\mu=f(\bx)\]
\end{theorem}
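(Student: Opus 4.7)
The plan is to prove the Lebesgue differentiation theorem by the classical route through the Hardy--Littlewood maximal function and approximation by continuous functions. Since the statement must hold $\mu$-almost everywhere and $f$ is bounded, it suffices (by restricting to balls) to work with $L^1$ functions, as bounded functions are locally integrable.

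First I would introduce the (uncentered or centered) Hardy--Littlewood maximal operator
\[
Mf(\bx) \;=\; \sup_{r>0}\; \frac{1}{\mu(B_r^2(\bx))}\int_{B_r^2(\bx)} |f(\by)|\,d\mu(\by),
\]
and prove a Vitali-type covering lemma: from any collection of balls $\{B_{r_i}^2(\bx_i)\}$ with bounded radii, one can extract a pairwise disjoint subcollection whose $3$-fold dilates cover the union. This is the geometric core of the argument, and combined with the standard layer-cake manipulation it yields the weak-type $(1,1)$ inequality
\[
\mu\bigl(\{\bx : Mf(\bx) > \alpha\}\bigr) \;\le\; \frac{3^d}{\alpha}\,\|f\|_{L^1(\mu)},
\]
valid for $f\in L^1(\mu)$.

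Second, I would establish the conclusion for continuous compactly supported $g$. For such $g$, uniform continuity gives that for every $\bx$ and every $\delta>0$ there is $r_0>0$ with $|g(\by)-g(\bx)|<\delta$ whenever $\by\in B_{r_0}^2(\bx)$, so the averages $\frac{1}{\mu(B_r^2(\bx))}\int_{B_r^2(\bx)} g\,d\mu$ lie within $\delta$ of $g(\bx)$ for $r<r_0$. Thus the limit equals $g(\bx)$ pointwise.

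Third, for the general case, I would localize: to prove the statement at almost every $\bx$ in a large ball $B_N^2(\zero)$, replace $f$ by $f\one_{B_{N+1}^2(\zero)}$, which is in $L^1(\mu)$ by boundedness of $f$ and finiteness of $\mu$ on bounded sets. Given $\eta>0$, use density of $C_c(\Rset^d)$ in $L^1(\mu)$ to pick $g\in C_c(\Rset^d)$ with $\|f-g\|_{L^1}<\eta$. Setting
\[
\Lambda f(\bx) \;=\; \limsup_{r\to 0} \left| \frac{1}{\mu(B_r^2(\bx))}\int_{B_r^2(\bx)} f\,d\mu - f(\bx)\right|,
\]
the triangle inequality together with the continuous case gives $\Lambda f(\bx) \le M(f-g)(\bx) + |f(\bx)-g(\bx)|$. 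Applying the weak-type inequality to $f-g$ and Markov's inequality to $|f-g|$ shows $\mu(\{\Lambda f > \alpha\}) \le C\eta/\alpha$ for each $\alpha>0$; letting $\eta\to 0$ gives $\mu(\{\Lambda f>\alpha\})=0$, and then $\alpha\to 0$ along a countable sequence yields $\Lambda f=0$ $\mu$-a.e., which is the claim.

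The main obstacle is the covering lemma and the resulting weak-type $(1,1)$ bound: the rest of the argument is a short approximation argument, but the Vitali-style extraction with the correct $3^d$ constant requires care because, unlike finite-measure bounds in $L^2$, there is no Hilbert-space shortcut. Once that inequality is in hand, the approximation-by-continuous-functions step proceeds mechanically.
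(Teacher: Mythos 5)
Your proposal is correct: the maximal-function route (Vitali covering lemma, weak-type $(1,1)$ bound, density of $C_c(\Rset^d)$ in $L^1$, then the $\limsup$ comparison) is a complete and standard proof, and the localization step correctly handles the fact that $f$ is merely bounded rather than integrable. The paper itself gives no proof of this statement --- it simply cites \citet{folland} --- and the argument you outline is essentially the one in that reference, so there is nothing to compare beyond noting that your write-up fills in the proof the paper delegates.
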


See \citet{folland} for a proof. We prove a slightly more general version of Lemma~\ref{lemma:regularity_ball}:

\begin{lemma}\label{lemma:regularity}
Let $\mu$ be Lebesgue measure and let $S\subset \Rset^d$. 
If for each $\bs\in \partial S$ there exists an open convex $C$ with $C\subset S$ and $\bs\in \partial C$, then $\mu(\partial S)=0$.
\end{lemma}

\begin{proof}[Proof of Lemma~\ref{lemma:regularity}]
We will apply the Lebesgue differentiation theorem to the function $\one_{\partial S}$. 

Let $B_r^2(\bx)$ denote the radius $r$ ball centered at $\bx$ given by the 2-norm. The Lebesgue differentiation theorem implies that the set defined by
\[E=\left\{ \bx\colon \lim_{r\to 0}\frac 1 {\mu(B_r^2(\bx))} \int_{B_r^2(\bx)} \one_{\partial S}(\by) dy\neq \one_{\partial S}(\bx)\right\}\]
has measure zero.
We will show $\partial S\subset E$, which will imply $\mu(\partial S)=0$.

This amounts to showing that for $\bx\in \partial S$,

\[\lim_{r\to 0}\frac 1 {\mu(B_r^2(\bx))} \int_{B_r^2(\bx)} \one_{\partial S}(\by) dy=\lim_{r\to 0} \frac{\mu(\partial S\cap B_r^2(\bx))}{\mu(B_r^2(\bx))}\neq 1\]

Specifically, we will show that for sufficiently small $r$, there exists a constant $K>0$ independent of $r$ for which 
\[\frac{\mu(\interior S\cap B_r^2(\bx))}{\mu(B_r^2(\bx))}\geq K>0.\]
This inequality will imply the result as 
\[\lim_{r\to 0} \frac{\mu(\partial S\cap B_r^2(\bx))}{\mu(B_r^2(\bx))}\leq 1-\liminf_{r\to 0} \frac{\mu(\interior S\cap B_r^2(\bx))}{\mu(B_r^2(\bx))}-\liminf_{r\to 0} \frac{\mu(\interior S^C\cap B_r^2(\bx))}{\mu(B_r^2(\bx))}.\]

Pick $\bx_0\in \partial S$. Then by assumption, there is an open convex set $C$ for which $\bx_0\in \partial C$ and $C\subset S$. As $C$ is open, $C\subset \interior S$. Furthermore, $B_1^2(\bx_0)\cap C$ is non-empty, open, and convex. Thus we can pick $d$ points $\bx_1\ldots \bx_d\in C$ for which the vectors $\{\bx_1-\bx_0\ldots \bx_d-\bx_0\}$ are linearly independent. By the convexity of $C\subset \interior S$, the interior of the convex hull of $\{\bx_0\ldots \bx_d\}$ is contained in $\interior S$. We will call the interior of this convex hull $T$. By construction, for any $r$, $ T\cap B_r^2(\bx_0)$ is disjoint from $\partial S$ and contained in $S$.
This implies 
\[\frac { \mu(\interior S\cap B_r^2(\bx))}{\mu( B_r^2(\bx))}\geq \frac { \mu( T\cap B_r^2(\bx))}{\mu( B_r^2(\bx))}\]


We we will show that for $r<\min_{i\in [1,n]}\|\bx_i-\bx_0\|$, 
\[\frac { \mu(T\cap B_r^2(\bx))}{\mu( B_r^2(\bx))}\geq K>0\] for some constant $K$.

Specifically, if $r<\min_{i\in [1,n]}\|\bx_i-\bx_0\|$, $B_r^2(\bx)$ contains $\bx_0+r\frac{\bx_i-\bx_0}{\|\bx_i-\bx_0\|}$ for each $i$. Then because $B_r^2(\bx)$ is convex, it must contain the simplex defined by these vectors which we will call $W$. See Figure~\ref{fig:simplex} for an illustration. 
\begin{figure}\label{fig:simplex}
\centering
\includegraphics[scale=0.6]{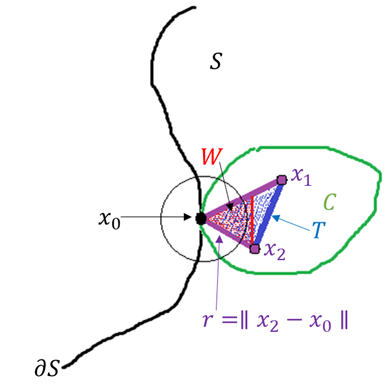}
\caption{The convex set $C$, the simplex $T$, and the simplex $W$ in two dimensions. The illustrated ball has radius less than $r$. In this Figure, $r=\|\bx_2-\bx_0\|$.}
\end{figure}
A standard calculation shows that $\mu(W)=\frac {r^d} {n!} |\det(M)|$, where 
\[M=\left[\frac{\bx_1-\bx_0}{\|\bx_1-\bx_0\|}\ldots \frac{\bx_n-\bx_0}{\|\bx_n-\bx_0\|} \right]\]
see for instance \citet{stein1966}.

Therefore, as the volume of a ball radius $r$ is $\frac {\pi^{\frac d2}r^d}{\Gamma(\frac d2+1)}$, we have shown that for $r<\min_{i\in [1,n]}\|\bx_i-\bx_0\|$, 
\[\frac { \mu(T\cap B_r^2(\bx))}{\mu( B_r^2(\bx))}\geq \frac{\mu(W)}{\mu(B_r^2(\bx))}= \frac{\frac {r^d} {d!} |\det M|}{\frac {\pi^{\frac d2}}{\Gamma(\frac d2+1)}r^d}= \frac{\Gamma(\frac d 2 +1) |\det M|}{d! \pi^{\frac d 2}}>0\]
\end{proof}

\section{Properties of the $\empty^\e,\empty^{-\e}$ Operations}\label{app:properties}\label{app:properties_basic}
    In this section, we will discuss some basic properties of the $\empty^\e,\empty^{-\e}$ operations. We will apply these results throughout the rest of the appendix. Furthermore, this section should highlight some of the intuition for working with these set operations. 
    
    We will adopt \eqref{eq:A^eps_def_union} as our definition for $A^\e$.

    This convention will allow us to generalize much of the results in this section to arbitrary metric spaces.
    After defining $A^\e$ we can then define $A^{-\e}$ as 
    \begin{equation}\label{eq:A^-eps_def_union}
        A^{-\e}=((A^C)^\e)^C.
    \end{equation}

    The following lemma details how the $\empty^\e$ and $\empty^{-\e}$ operations interact with unions and intersections.
    
    \begin{lemma}\label{lemma:eps_set_relations_R^d}
    Define $A^\e$ as in \eqref{eq:A^eps_def_union} and $A^{-\e}$ as \eqref{eq:A^-eps_def_union}. 
    Then for any sequence of sets $\{A_i\}$, the following set containments hold:

    \noindent\begin{minipage}{.5\linewidth}
    \begin{equation}\label{eq:eps_cup}
        \bigcup_{i=1}^\infty A_i^\e=\left[\bigcup_{i=1}^\infty A_i\right]^\e
    \end{equation}
\end{minipage}
\begin{minipage}{.5\linewidth}
     \begin{equation}\label{eq:-e_out}
        \bigcap_{n\geq 1} A_n^{-\e} = \paren[\Big]{\bigcap_{n\geq 1} A_n}^{-\e}
    \end{equation}
\end{minipage}
\begin{minipage}{.5\linewidth}
    \begin{equation}\label{eq:eps_cap}
        \bigcap_{i=1}^\infty A_i^\e \supset\left[\bigcap_{i=1}^\infty A_i\right]^\e
    \end{equation}
\end{minipage}
\begin{minipage}{.5\linewidth}
    \begin{equation}\label{eq:-eps_cup}
    \bigcup_{i=1}^\infty A_i^{-\e} \subset         \left[\bigcup_{i=1}^\infty A_i\right]^{-\e}
    \end{equation}
\end{minipage}\par\vspace{\belowdisplayskip}
\end{lemma}
\begin{proof}
\textbf{Showing \eqref{eq:eps_cup}:}\\
For any set $A$, one can write
\begin{equation*}A^\e=\bigcup_{\ba\in A} \overline{B_\e( \ba)}.\end{equation*}
Thus 
\[\bigcup_{i=1}^\infty A_i^\e=\bigcup_{i=1}^\infty \bigcup_{\ba\in A_i} \overline{B_\e(\ba) }=\bigcup_{\ba\in \bigcup_{i=1}^\infty A_i}\overline{B_\e(\ba)}=\left(\bigcup_{i=1}^\infty A_i\right)^\e.\]

\textbf{Showing \eqref{eq:eps_cap}:}\\
First note that if $C\supset B$, then $C^\e\supset B^\e$.
Next, since $A_i\supset \bigcap_{i=1}^{\infty} A_i$, 
\[A_i^\e\supset \left(  \bigcap_{j=1}^{\infty} A_j\right)^\e\] for all $i$. Thus \eqref{eq:eps_cap} holds.

\textbf{Showing \eqref{eq:-e_out}:}\\
Recall that $A^{-\e}=((A^C)^\e)^C$. If we apply \eqref{eq:eps_cup} to $(A_i^C)^\e$, we get that
\[\bigcup_{i=1}^\infty \left(A_i^C\right)^\e=\left(\bigcup_{i=1}^\infty A_i^C\right)^\e= \left(\left(\bigcap_{i=1}^\infty A_i\right)^C\right)^\e.\] Now upon taking complements, 
\[\left(\bigcap_{i=1}^\infty A_i\right)^{-\e}=\left(\bigcup_{i=1}^\infty \left(A_i^C\right)^\e\right)^C=\bigcap_{i=1}^\infty \left(\left(A_i^C\right)^\e\right)^C=\bigcap_{i=1}^\infty A_i^{-\e}.\]

\textbf{Showing \eqref{eq:-eps_cup}:}
If we apply \eqref{eq:eps_cap} to $A_i^C$, then 
\[\bigcap_{i=1}^\infty \left(A_i^C\right)^\e\supset \left(  \bigcap_{i=1}^{\infty} A_i^C\right)^\e= \left( \left( \bigcup_{i=1}^{\infty} A_i\right)^C\right)^\e.\] Taking complements gives \eqref{eq:-eps_cup}.
\end{proof}

Next, we use the previous representations to show that $F(A^\e)=\emptyset$ and $F((A^{-\e})^C)=\emptyset$, where we define $F(\cdot)$ in \eqref{eq:define_F(A)}.
\begin{lemma}\label{lemma:F_A^eps_empty}
    For a set $A$, define
        \begin{equation*}
            F(A)=\{\bx\in A:\text{ every closed $\e$-ball containing $\bx$ also intersects }A^C\}\tag{\ref{eq:define_F(A)}}
        \end{equation*}
    Then
    \begin{equation}\label{eq:F_A^eps_empty}                    F(A^\e)=\emptyset 
    \end{equation}
    \begin{equation}\label{eq:F_A^-eps_C_empty}                 F((A^{-\e})^C)=\emptyset 
    \end{equation}
\end{lemma}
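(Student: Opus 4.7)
The plan is to unwind the definition of $A^\e$ as a union of closed balls and observe that this presentation makes the required "witness ball" transparent at every point.

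For \eqref{eq:F_A^eps_empty}, I would fix an arbitrary $\bx \in A^\e$ and show $\bx \notin F(A^\e)$ by producing one closed $\e$-ball that contains $\bx$ and misses $(A^\e)^C$. Using the definition $A^\e = \bigcup_{\ba \in A} \overline{B_\e(\ba)}$ from \eqref{eq:A^eps_def_union}, there exists $\ba \in A$ such that $\bx \in \overline{B_\e(\ba)}$. The same union representation immediately gives $\overline{B_\e(\ba)} \subset A^\e$, so this ball contains $\bx$, has radius $\e$, and does not intersect $(A^\e)^C$. Therefore $\bx$ fails the defining condition of $F(A^\e)$, so $F(A^\e)$ is empty.

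For \eqref{eq:F_A^-eps_C_empty}, I would reduce to \eqref{eq:F_A^eps_empty} by rewriting the set in question. Using \eqref{eq:A^-eps_def_union}, we have $A^{-\e} = ((A^C)^\e)^C$, hence $(A^{-\e})^C = (A^C)^\e$. Applying \eqref{eq:F_A^eps_empty} with $A$ replaced by the set $A^C$ yields $F((A^C)^\e) = \emptyset$, which is precisely $F((A^{-\e})^C) = \emptyset$.

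There is no real obstacle here: once the representation \eqref{eq:A^eps_def_union} is in hand, each point of $A^\e$ lies in an explicitly exhibited closed $\e$-ball that stays inside $A^\e$, which is exactly the property that contradicts membership in $F(A^\e)$. The only mild subtlety is bookkeeping: the ball centered at the witness $\ba$ is not, in general, centered at $\bx$, but the definition of $F$ only requires \emph{some} closed $\e$-ball containing $\bx$, so centering at $\ba$ suffices. Part (2) is then a one-line corollary via complementation.
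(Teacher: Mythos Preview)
Your proposal is correct and matches the paper's proof essentially verbatim: the paper also uses the representation $A^\e=\bigcup_{\ba\in A}\overline{B_\e(\ba)}$ to exhibit a closed $\e$-ball through each $\bx\in A^\e$ that lies inside $A^\e$, and then derives \eqref{eq:F_A^-eps_C_empty} by applying \eqref{eq:F_A^eps_empty} to $A^C$.
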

This lemma is an important stepping stone towards showing that there exists a pseudo-certifiably robust adversarial Bayes classifier.
\begin{proof}[Proof of Lemma~\ref{lemma:F_A^eps_empty}]
Equation~\ref{eq:A^eps_def_union}
implies that each point $\bx$ in $A^\e$ is included in some closed $\e$-ball that is contained in $A^\e$. Subsequently, the definition of $F$ in \eqref{eq:define_F(A)} implies \eqref{eq:F_A^eps_empty}. Lastly, \eqref{eq:F_A^-eps_C_empty} follows by applying \eqref{eq:F_A^eps_empty} to $(A^{-\e})^C$.
\end{proof}

The next lemma provides an alternative interpretation of the $\empty^\e,\empty^{-\e}$ operations.
\begin{lemma}\label{lemma:more_about_-eps}
Define $A^\e,A^{-\e}$ as in \eqref{eq:A^eps_def_union},\eqref{eq:A^-eps_def_union}. Then alternative characterizations of $A^\e,A^{-\e}$ are given by
\begin{equation}\label{eq:A^eps_description}
A^\e=\{\bx\in X: \overline{B_\e(\bx)}\cap A\neq \emptyset\}
\end{equation}

\begin{equation}\label{eq:A^-eps_description} A^{-\e}=\{\ba: \overline{B_\e(\ba)}\subset A\}\end{equation}

\end{lemma}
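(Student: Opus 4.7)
The plan is to prove the two characterizations in sequence, deriving the second directly from the first via the defining relation $A^{-\e}=((A^C)^\e)^C$. The key underlying fact is the symmetry of the closed ball in a metric space: $\bx\in\overline{B_\e(\by)}$ iff $d(\bx,\by)\leq\e$ iff $\by\in\overline{B_\e(\bx)}$. This symmetry is the only nontrivial ingredient, and it is built into the metric/norm structure we are working with.

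For \eqref{eq:A^eps_description}, I would argue by double inclusion starting from the definition $A^\e=\bigcup_{\ba\in A}\overline{B_\e(\ba)}$. If $\bx\in A^\e$, then $\bx\in\overline{B_\e(\ba)}$ for some $\ba\in A$, which by symmetry of the ball gives $\ba\in\overline{B_\e(\bx)}\cap A$, so this intersection is nonempty. Conversely, if $\overline{B_\e(\bx)}\cap A\neq\emptyset$, pick any $\ba$ in this intersection; then $d(\ba,\bx)\leq\e$, hence $\bx\in\overline{B_\e(\ba)}\subset A^\e$.

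For \eqref{eq:A^-eps_description}, I would unwind the definition $A^{-\e}=((A^C)^\e)^C$ and then invoke \eqref{eq:A^eps_description} applied to $A^C$. Explicitly, $\ba\in A^{-\e}$ iff $\ba\notin(A^C)^\e$, and by the first part this happens iff $\overline{B_\e(\ba)}\cap A^C=\emptyset$, which is the same as $\overline{B_\e(\ba)}\subset A$.

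There is no real obstacle here; the whole lemma is essentially bookkeeping once one observes that the metric's symmetry swaps the roles of the center and the point in the ball membership relation. The only thing to be careful about is keeping \eqref{eq:A^eps_def_union} as the primary definition and using $A^{-\e}=((A^C)^\e)^C$ from \eqref{eq:A^-eps_def_union}, so that \eqref{eq:A^-eps_description} is genuinely a consequence rather than a restatement of a definition.
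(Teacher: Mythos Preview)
Your proposal is correct and matches the paper's proof essentially line for line: the paper also uses the symmetry $\bz\in\overline{B_\e(\ba)}\Leftrightarrow\ba\in\overline{B_\e(\bz)}$ for \eqref{eq:A^eps_description}, and then unwinds $A^{-\e}=((A^C)^\e)^C$ via \eqref{eq:A^eps_description} applied to $A^C$ for \eqref{eq:A^-eps_description}. The only cosmetic difference is that the paper writes the first part as a chain of biconditionals rather than two inclusions.
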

Notice that in $\Rset^d$ \eqref{eq:A^-eps_description} reduces to
\[A^{-\e}=\{\ba\in A: \ba+\bh\in A \text{ for all }\bh \text{ with } \|\mathbf \bh\|\leq \e\}\]
\begin{proof}[Proof of Lemma~\ref{lemma:more_about_-eps}]
\textbf{Showing \eqref{eq:A^eps_description}:}\\
Recall that $\bz\in A^\e$ iff for some $\ba\in A$, $\bz\in \overline{B_\e(\ba)}$. However, 
\[\bz\in \overline{B_\e(\ba)} \Leftrightarrow \ba\in \overline{B_\e(\bz)}\Leftrightarrow \overline{B_\e(\bz)}\text{ intersects }A\]

\textbf{Showing \eqref{eq:A^-eps_description}:}\\
Recall the definition $A^{-\e}= ((A^C)^\e)^C$. Then 
\begin{align*} 
 & \ba\in A^{-\e}\\
&\Leftrightarrow \ba\not \in (A^C)^\e\\
&\Leftrightarrow \overline{B_\e(\ba)} \text{ does not intersect }A^C\text{ (by \eqref{eq:A^eps_description})}\\
&\Leftrightarrow  \overline{B_\e(\ba)}\subset A 
\end{align*}

\end{proof}

\section{Proof of Lemma~\ref{lemma:to_pcr_set} }
\label{app:sequence_original}\label{app:properties_succession}
 
In some of our proofs, we apply the $\empty^\e$ and $\empty^{-\e}$ operations to sets multiple times in succession. In this section, we describe how the $\empty^\e$ and the $\empty^{-\e}$ operations interact. These considerations turn out to be important because applying $\empty^\e$ followed by $\empty^{-\e}$ to a set (or vice versa) decreases the adversarial loss. We prove this statement in Lemmas~\ref{lemma:to_pcr_set} and~\ref{lemma:decreaseR_e(A)}, which are the central conclusions of this Appendix.

Our first result states that applying $\empty^{-\e}$ an then $\empty^{\e}$ to a set $A$ makes the set smaller while applying $\empty^{\e}$ and then $\empty^{-\e}$ makes the set larger.

 \begin{lemma}\label{lemma:A_e_-e_containment}
Define the $\empty^\e,\empty^{-\e}$ operations as in \eqref{eq:A^eps_def_union}, \eqref{eq:A^-eps_def_union}. Then 
\begin{equation}\label{eq:supset}
(A^\e)^{-\e}\supset A
\end{equation}
\begin{equation}\label{eq:subset}
(A^{-\e})^\e\subset A
\end{equation}
\end{lemma}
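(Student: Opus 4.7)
The plan is to use the characterizations of $A^\e$ and $A^{-\e}$ established in Lemma~\ref{lemma:more_about_-eps}, namely $A^\e=\{\bx: \overline{B_\e(\bx)}\cap A\neq \emptyset\}$ and $A^{-\e}=\{\ba:\overline{B_\e(\ba)}\subset A\}$. Using these, the first containment \eqref{eq:supset} follows almost directly from the symmetry of the metric ball (the relation $\bx\in \overline{B_\e(\ba)}\Leftrightarrow \ba\in \overline{B_\e(\bx)}$), and the second containment \eqref{eq:subset} can then be deduced from the first by a duality argument applied to $A^C$.

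For \eqref{eq:supset}, I would take an arbitrary $\ba\in A$ and show $\ba\in (A^\e)^{-\e}$. By \eqref{eq:A^-eps_description}, this amounts to verifying that $\overline{B_\e(\ba)}\subset A^\e$. So let $\bx\in \overline{B_\e(\ba)}$; by symmetry of the ball $\ba\in \overline{B_\e(\bx)}$, hence $\overline{B_\e(\bx)}\cap A\supseteq \{\ba\}\neq \emptyset$, and \eqref{eq:A^eps_description} yields $\bx\in A^\e$, as needed.

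For \eqref{eq:subset}, I would apply \eqref{eq:supset} to the complementary set $A^C$ to obtain $((A^C)^\e)^{-\e}\supset A^C$. Using the identity $A^{-\e}=((A^C)^\e)^C$, which rearranges to $(A^{-\e})^C=(A^C)^\e$, we compute
\[
((A^C)^\e)^{-\e} = ((((A^C)^\e)^C)^\e)^C = ((A^{-\e})^\e)^C,
\]
so the inclusion $((A^{-\e})^\e)^C\supset A^C$ becomes $(A^{-\e})^\e\subset A$ upon taking complements.

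The argument is short and the only subtle point is bookkeeping of complements when transferring \eqref{eq:supset} to its dual form; no obstacle is expected beyond carefully invoking the characterizations of $A^\e$ and $A^{-\e}$ from Lemma~\ref{lemma:more_about_-eps} and using the symmetry of the closed ball.
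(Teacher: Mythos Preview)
Your proof is correct and follows essentially the same strategy as the paper: prove one inclusion directly using the characterizations from Lemma~\ref{lemma:more_about_-eps}, then obtain the other by applying the first to $A^C$ and taking complements. The only difference is that you prove \eqref{eq:supset} directly and deduce \eqref{eq:subset}, whereas the paper's argument (despite a small labeling slip in its text) proves \eqref{eq:subset} directly from the union representation $(A^{-\e})^\e=\bigcup_{\bx\in A^{-\e}}\overline{B_\e(\bx)}$ together with \eqref{eq:A^-eps_description}, and then appeals to duality for \eqref{eq:supset}.
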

\begin{proof}
 To start, note that \eqref{eq:subset} follows from applying \eqref{eq:supset} to $A^C$ and then taking complements. 
 
 In order to show \eqref{eq:supset}, we make use of Equation~\ref{eq:A^-eps_description}.
 Equation~\ref{eq:A^-eps_description} implies that if $\bx\in A^{-\e}$, then $\overline{B_\e(\bx)}\subset A$. As 
 \[(A^{-\e})^\e=\bigcup_{\bx\in A^{-\e}} \overline{B_\e(\bx)}\] and each $\overline{B_\e(\bx)}$ is entirely contained in $A$, the entire set $(A^{-\e})^\e$ is contained in $A$ as well.
\end{proof}

 \begin{lemma}
 \label{lemma:characterize_eps}
Define $A^\e,A^{-\e}$ as in \eqref{eq:A^eps_def_union},\eqref{eq:A^-eps_def_union}. Then the following hold:

\begin{align}
&A = (A^{-\e})^\e\sqcup F(A)\label{eq:A_-eps_eps}\\
&(A^\e)^{-\e} = A\sqcup F(A^C).\label{eq:A_eps_-eps}
 \end{align}
\end{lemma}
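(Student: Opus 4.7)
The plan is to establish the first identity \eqref{eq:A_-eps_eps} by a direct three-step argument and then deduce the second identity \eqref{eq:A_eps_-eps} by duality, applying \eqref{eq:A_-eps_eps} to $A^C$ and complementing. The central tools are the characterizations $A^{-\e}=\{y\colon \overline{B_\e(y)}\subset A\}$ from \eqref{eq:A^-eps_description} and $A^\e=\{x\colon \overline{B_\e(x)}\cap A\neq\emptyset\}$ from \eqref{eq:A^eps_description}, together with the definition \eqref{eq:define_F(A)} of $F$.

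For \eqref{eq:A_-eps_eps}, first I would check that both pieces on the right lie in $A$: the inclusion $(A^{-\e})^\e\subset A$ is \eqref{eq:subset} of Lemma~\ref{lemma:A_e_-e_containment}, and $F(A)\subset A$ is built into the definition. Next, for disjointness, if $x\in (A^{-\e})^\e$ then $x$ lies in some $\overline{B_\e(y)}$ with $y\in A^{-\e}$, and by \eqref{eq:A^-eps_description} that ball sits entirely in $A$; this is exactly the kind of witness ruling $x$ out of $F(A)$. Finally, to show $A\subset (A^{-\e})^\e\cup F(A)$, given $x\in A$ I would split on whether some closed $\e$-ball containing $x$ lies in $A$: if yes, that ball's center $y$ satisfies $\overline{B_\e(y)}\subset A$, hence $y\in A^{-\e}$ and $x\in (A^{-\e})^\e$; if no, then every such ball meets $A^C$, so $x\in F(A)$ by definition.

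For \eqref{eq:A_eps_-eps}, applying the already-proven \eqref{eq:A_-eps_eps} to $A^C$ yields $A^C=((A^C)^{-\e})^\e\sqcup F(A^C)$. The required algebraic identity is $\bigl(((A^C)^{-\e})^\e\bigr)^C=(A^\e)^{-\e}$, which I would obtain by combining $(A^C)^{-\e}=(A^\e)^C$ (immediate from the definition of $\empty^{-\e}$ applied to $A^C$) with the definition $(A^\e)^{-\e}=\bigl(((A^\e)^C)^\e\bigr)^C$. Complementing the decomposition of $A^C$ and applying de Morgan then gives $(A^\e)^{-\e}=A\cup F(A^C)$, and the union is automatically disjoint because $F(A^C)\subset A^C$.

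The main obstacle is essentially only bookkeeping: keeping track of which operation applies to which set and where the complements are placed. If a purely direct proof of \eqref{eq:A_eps_-eps} is preferred, the same three-step template goes through using \eqref{eq:supset} for $A\subset(A^\e)^{-\e}$ and the observation that whenever $y$ is within distance $\e$ of $x$ the ball $\overline{B_\e(y)}$ itself contains $x$, so the defining property of $F(A^C)$ forces $y\in A^\e$ via \eqref{eq:A^eps_description}; the reverse containment then follows by the contrapositive of the same observation.
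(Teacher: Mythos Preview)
Your proposal is correct and follows essentially the same route as the paper: the three-step argument for \eqref{eq:A_-eps_eps} matches the paper's proof almost verbatim, and for \eqref{eq:A_eps_-eps} both you and the paper apply the first identity to $A^C$ and complement. Your duality step is in fact slightly cleaner: the paper, after obtaining $A=(A^\e)^{-\e}\cap F(A^C)^C$, gives a separate direct verification that $F(A^C)\subset (A^\e)^{-\e}$, but this inclusion already follows from the disjointness of the decomposition $A^C=((A^\e)^{-\e})^C\sqcup F(A^C)$, exactly as your de~Morgan argument implicitly uses.
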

 Specifically, \eqref{eq:A_-eps_eps} implies that $(A^{-\e})^\e=A- F(A)$ and \eqref{eq:A_eps_-eps} implies that $(A^\e)^{-\e} =A\cup F(A^C)$.
 Figure~\ref{fig:F(A)_F(A^C)} illustrates the sets $F(A)$ and $F(A^C)$.

 \begin{proof}[Proof of Lemma~\ref{lemma:characterize_eps}]
\mbox{}\\\textbf{Showing $\supset$ for \eqref{eq:A_-eps_eps}:}\\
It's clear that $F(A)\subset A$ and Lemma~\ref{lemma:A_e_-e_containment} implies that $(A^{-\e})^\e\subset A$ as well. \\
\textbf{Showing $\subset$ for \eqref{eq:A_-eps_eps}:}\\
We will prove that $A-F(A)\subset (A^{-\e})^\e$. Assume that $\bx\in A-F(A)$. Then there is a closed $\e$-ball containing $\bx$ that does not intersect $A^C$, which means that this ball is completely contained in $A$. Thus for some $\ba\in A,$ $\bx\in \overline{B_\e(\ba)}\subset A$. Thus by \eqref{eq:A^-eps_description}, $\ba\in A^{-\e}$. Furthermore, $\bx\in \overline{B_\e(\ba)}$ implies that $\bx\in (A^{-\e})^\e$.

\textbf{Showing disjoint union for \eqref{eq:A_-eps_eps}:}

Lemma~\ref{lemma:A_e_-e_containment} states that $(A^{-\e})^\e\subset A$. Specifically, every point in $(A^{-\e})^\e$ is contained in a closed $\e$-ball that is contained in $A$. As no point in $F(A)$ satisfies this property, $(A^{-\e})^\e$ and $F(A)$ are disjoint. 

\textbf{Showing \eqref{eq:A_eps_-eps}:}

Applying \eqref{eq:A_-eps_eps} to $A^C$ results in
\[A^C=((A^C)^{-\e})^\e\sqcup F(A^C)=((A^\e)^C)^\e\sqcup F(A^C).\] Taking complements of both sides of this equation produces
\[A= (A^\e)^{-\e}\cap F(A^C)^C\]
and therefore
\[A\cup \left( (A^\e)^{-\e} \cap F(A^C)\right)=(A^\e)^{-\e}.\]
The union is actually a disjoint union because $F(A^C)\subset A^C$ which is disjoint from $A$. It remains to show that $F(A^C)\subset (A^\e)^{-\e}$, so that $F(A^C)\cap (A^\e)^{-\e}=F(A^C)$. 

We now show that $F(A^C)\subset (A^\e)^{-\e}$. Pick $\bx \in F(A^C)$. We will show that for every $\by\in \overline{B_\e(\bx)}$, $\by \in A^\e$. This statement will imply that $\overline{B_\e(\bx)}\subset A^\e$ and then \eqref{eq:A^-eps_description} will then imply that $\bx\in (A^\e)^{-\e}$.

If $\by\in \overline{B_\e(\bx)}$, then $\overline{B_\e(\by)}$ contains $\bx$. By definition, because $\bx\in F(A^C)$, every ball containing $\bx$ intersects $A$. Therefore $\ov{B_\e(\by)}$ intersects $A$ and then \eqref{eq:A^eps_description} then implies that $\by\in A^\e$. 

\end{proof}
In the previous lemma, we characterized $(A^{-\e})^\e$ and $(A^{\e})^{-\e}$, in terms of $A$ and $F(\cdot)$ but this characterization is a little complicated. Here, we show that if in fact $A=B^{-\e}$ some set $B$, then $(A^{\e})^{-\e}$ simplifies. Similarly, $(A^{-\e})^\e$ simplifies if in fact $A=B^\e$ for some set $B$.

\begin{lemma}
\label{lemma:eps_-eps_repeats}
For any set $A$, the following hold:
\begin{equation*}
\Big((A^\e)^{-\e}\Big)^\e = A^\e, \quad\quad
\Big((A^{-\e})^\e\Big)^{-\e} = A^{-\e}.
\end{equation*}
\end{lemma}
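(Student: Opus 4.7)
The plan is to apply the previous two lemmas, Lemma~\ref{lemma:characterize_eps} and Lemma~\ref{lemma:F_A^eps_empty}, almost directly. Both identities reduce to substitution once we recognize that the inner set in each expression is itself of the form $B^\e$ or $B^{-\e}$ for some $B$.

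First I would handle $((A^\e)^{-\e})^\e = A^\e$. Setting $B = A^\e$, Lemma~\ref{lemma:characterize_eps} (specifically the identity $A = (A^{-\e})^\e \sqcup F(A)$ applied to $B$ in place of $A$) gives
\begin{equation*}
B = (B^{-\e})^\e \sqcup F(B),
\end{equation*}
i.e., $(B^{-\e})^\e = B - F(B)$. Now $F(B) = F(A^\e) = \emptyset$ by \eqref{eq:F_A^eps_empty} in Lemma~\ref{lemma:F_A^eps_empty}, so $(B^{-\e})^\e = B = A^\e$, which is the desired identity.

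Second, for $((A^{-\e})^\e)^{-\e} = A^{-\e}$, I would set $C = A^{-\e}$. The identity $(A^\e)^{-\e} = A \sqcup F(A^C)$ from Lemma~\ref{lemma:characterize_eps}, applied to $C$ in place of $A$, yields
\begin{equation*}
(C^\e)^{-\e} = C \sqcup F(C^C) = A^{-\e} \sqcup F((A^{-\e})^C).
\end{equation*}
By \eqref{eq:F_A^-eps_C_empty} in Lemma~\ref{lemma:F_A^eps_empty}, $F((A^{-\e})^C) = \emptyset$, so $(C^\e)^{-\e} = A^{-\e}$ as claimed.

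There is essentially no obstacle here: the whole content of the lemma is that $A^\e$ has no ``trimmable'' boundary points (every point lies in some closed $\e$-ball inside $A^\e$) and, dually, $(A^{-\e})^C$ has no trimmable boundary points, so the extra round of $\empty^{-\e}\empty^\e$ or $\empty^\e\empty^{-\e}$ has nothing to remove or add. The only minor subtlety worth mentioning is to make sure the hypotheses of Lemmas~\ref{lemma:characterize_eps} and~\ref{lemma:F_A^eps_empty} are stated for arbitrary sets (they are), so that we may apply them to $A^\e$ and $A^{-\e}$ without additional regularity assumptions.
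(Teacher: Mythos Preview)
Your proof is correct and follows exactly the same approach as the paper: apply Lemma~\ref{lemma:characterize_eps} to $A^\e$ (respectively $A^{-\e}$) and then invoke Lemma~\ref{lemma:F_A^eps_empty} to see that the $F(\cdot)$ term vanishes. The only difference is notational---you introduce intermediate names $B$ and $C$ where the paper substitutes inline.
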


\begin{proof}[Proof of Lemma~\ref{lemma:eps_-eps_repeats}]
By Lemmas~\ref{lemma:F_A^eps_empty} and~\ref{lemma:characterize_eps}, 
\[\Big((A^\e)^{-\e}\Big)^\e=(\Big(A^\e\Big)^{-\e})^\e=A^\e-F(A^\e)=A^\e.\]
Similarly,
\[\Big((A^{-\e})^\e\Big)^{-\e}=(\Big(A^{-\e}\Big)^\e)^{-\e}=A^{-\e}\cup F((A^{-\e})^C)=A^{-\e}. \]
\end{proof}

We next prove a short lemma that will help us understand how the $-\e,\e$ operations reduce the adversarial loss.
\begin{lemma}\label{lemma:BCD_contain}
Let $\empty^\e,\empty^{-\e}$ be as in \eqref{eq:A^eps_def_union} and \eqref{eq:A^-eps_def_union}. Consider a set $B\subset X$. Then if $D=(B^{-\e})^\e$ and $C=(B^\e)^{-\e}$, then $C^\e\subset B^\e, C^{-\e}\supset B^{-\e}$ and $D^\e\subset B^\e,D^{-\e}\supset B^{-\e}$.
\end{lemma}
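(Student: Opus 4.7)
The plan is to reduce each of the four containments to a one-line consequence of Lemma~\ref{lemma:eps_-eps_repeats} together with the monotonicity of the $\empty^\e$ and $\empty^{-\e}$ operations. First I will note the (easy) monotonicity fact: if $A_1 \subset A_2$ then $A_1^\e \subset A_2^\e$ directly from \eqref{eq:A^eps_def_union}, and hence $A_1^{-\e} \subset A_2^{-\e}$ by taking complements and applying monotonicity of $\empty^\e$ to $A_1^C \supset A_2^C$.

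Next, for the statements involving $C = (B^\e)^{-\e}$: the second identity of Lemma~\ref{lemma:eps_-eps_repeats} applied to $B^\e$ gives $C^\e = ((B^\e)^{-\e})^\e = B^\e$, which immediately yields $C^\e \subset B^\e$. For $C^{-\e} \supset B^{-\e}$, inclusion \eqref{eq:supset} of Lemma~\ref{lemma:A_e_-e_containment} says $C = (B^\e)^{-\e} \supset B$, and then monotonicity of $\empty^{-\e}$ delivers $C^{-\e} \supset B^{-\e}$.

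For the statements involving $D = (B^{-\e})^\e$, the arguments are symmetric. Inclusion \eqref{eq:subset} of Lemma~\ref{lemma:A_e_-e_containment} gives $D = (B^{-\e})^\e \subset B$, so monotonicity of $\empty^\e$ yields $D^\e \subset B^\e$. Finally, the first identity of Lemma~\ref{lemma:eps_-eps_repeats} applied to $B^{-\e}$ gives $D^{-\e} = ((B^{-\e})^\e)^{-\e} = B^{-\e}$, and hence trivially $D^{-\e} \supset B^{-\e}$.

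There is no real obstacle here; the only thing worth being careful about is that two of the four containments are actually equalities (those reduced via Lemma~\ref{lemma:eps_-eps_repeats}), while the other two are strict one-sided inclusions obtained from Lemma~\ref{lemma:A_e_-e_containment} plus monotonicity. Since monotonicity of $\empty^{-\e}$ is not proved explicitly earlier in the excerpt, I will include its short derivation (taking complements twice and invoking monotonicity of $\empty^\e$) to keep the proof self-contained.
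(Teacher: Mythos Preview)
Your proof is correct and follows essentially the same route as the paper's: two of the four containments come as equalities from Lemma~\ref{lemma:eps_-eps_repeats}, and the other two from Lemma~\ref{lemma:A_e_-e_containment} plus monotonicity of $\empty^\e$ and $\empty^{-\e}$. One small labeling slip: you have the ``first'' and ``second'' identities of Lemma~\ref{lemma:eps_-eps_repeats} swapped (the identity $((A^\e)^{-\e})^\e = A^\e$ is the first one, and it is this identity with $A=B$ that gives $C^\e = B^\e$; symmetrically for $D^{-\e}$), but the computations themselves are correct.
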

\begin{proof}
First consider the set $D$. Then by Lemma~\ref{lemma:eps_-eps_repeats}, $D^{-\e}=B^{-\e}$. Furthermore, according to Lemma~\ref{lemma:A_e_-e_containment}, $D\subset B$, so that $D^\e\subset B^\e$. 
 
 Next, according to Lemma~\ref{lemma:eps_-eps_repeats}, $C^\e= B^\e$. Furthermore, according to Lemma~\ref{lemma:A_e_-e_containment}, $C\supset B$, so that $C^{-\e}\supset B^{-\e}$.
\end{proof}
Lastly, we prove a lemma which states that applying the $\empty^{\e},\empty^{-\e}$ operations in succession decreases the adversarial loss. Observe that $R^\e$ incurs a penalty of 1 on both $F(A)$ and $F(A^C)$ because points in these sets are always within $\e$ of a point with the opposite class label.

 \begin{lemma}
 \label{lemma:decreaseR_e(A)}
For any set $A$, the following hold:
\begin{align}
&R^\e(A)\geq R((A^\e)^{-\e}) \label{eq:A_e_-e_less}\\
&R^\e(A)\geq R((A^{-\e})^\e) \label{eq:A_-e_e_less}.\end{align}
\end{lemma}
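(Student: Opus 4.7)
The plan is to derive this lemma as an immediate corollary of Lemma~\ref{lemma:BCD_contain}, combined with the monotonicity of $R^\e$ in its ingredient indicator functions. Recall that the adversarial risk, using $A^{-\e} = ((A^C)^\e)^C$, can be rewritten as
\[
R^\e(A) = \int (1-\eta(\bx))\,\one_{A^\e} + \eta(\bx)\,\one_{(A^{-\e})^C}\, d\PP.
\]
Since the coefficients $1-\eta(\bx)$ and $\eta(\bx)$ are both nonnegative and $\PP$ is a nonnegative measure, if $S, T$ are universally measurable sets satisfying $S^\e \subset T^\e$ and $S^{-\e} \supset T^{-\e}$, then pointwise $\one_{S^\e} \leq \one_{T^\e}$ and $\one_{(S^{-\e})^C} \leq \one_{(T^{-\e})^C}$, and hence $R^\e(S) \leq R^\e(T)$.

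To prove \eqref{eq:A_e_-e_less}, I set $C = (A^\e)^{-\e}$ and apply Lemma~\ref{lemma:BCD_contain} with $B = A$: this yields $C^\e \subset A^\e$ and $C^{-\e} \supset A^{-\e}$. The monotonicity observation above then gives $R^\e((A^\e)^{-\e}) = R^\e(C) \leq R^\e(A)$. For \eqref{eq:A_-e_e_less}, I set $D = (A^{-\e})^\e$ and apply the other half of Lemma~\ref{lemma:BCD_contain} with $B = A$, obtaining $D^\e \subset A^\e$ and $D^{-\e} \supset A^{-\e}$; the same monotonicity argument yields $R^\e((A^{-\e})^\e) \leq R^\e(A)$.

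There is essentially no obstacle here, as the real content is packaged into Lemma~\ref{lemma:BCD_contain} (which in turn depends on Lemma~\ref{lemma:eps_-eps_repeats} and Lemma~\ref{lemma:A_e_-e_containment}). The only thing worth emphasizing in the write-up is the direction of the two inclusions: applying $\empty^\e$ after $\empty^{-\e}$ shrinks the set (removing the ``frayed edge'' $F(A)$) while applying $\empty^{-\e}$ after $\empty^\e$ enlarges it (adding $F(A^C)$), and in both cases the resulting set has the $\empty^\e$-image contained in $A^\e$ and the $\empty^{-\e}$-image containing $A^{-\e}$ simultaneously. Those two inclusions are exactly what is needed for the adversarial loss to decrease.
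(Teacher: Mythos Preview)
Your proposal is correct and essentially identical to the paper's own proof: the paper also sets $B=(A^{-\e})^\e$ or $B=(A^\e)^{-\e}$, invokes Lemma~\ref{lemma:BCD_contain} to obtain $B^\e\subset A^\e$ and $B^{-\e}\supset A^{-\e}$, and then uses pointwise monotonicity of the integrand to conclude $R^\e(B)\leq R^\e(A)$. Your write-up is a touch more explicit about the monotonicity criterion, but there is no substantive difference.
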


\begin{proof}[Proof of Lemma~\ref{lemma:decreaseR_e(A)}]
The basic idea here is that the maximum penalty is incurred on $F(A)$, so removing $F(A)$ from $A$ and adding it to $A^C$ will not increase the loss. (Compare the statement of this lemma with Lemma~\ref{lemma:characterize_eps} and Figure~\ref{fig:F(A)_F(A^C)}.)  The same holds for $F(A^C)$ and $A^C$.

Let $B=(A^{-\e})^\e$ or $B=(A^\e)^{-\e}$. Lemma~\ref{lemma:BCD_contain} implies that $B^\e\subset A^\e$ and $B^{-\e}\supset A^{-\e}$. These containments imply the result because if $B^\e\subset A^\e$ and $B^{-\e}\supset A^{-\e}$ then 
\[\eta(\bx) \one_{A^\e}+(1-\eta(\bx)) \one_{(A^C)^\e}\geq \eta(\bx) \one_{B^\e}+(1-\eta(\bx)) \one_{(B^C)^\e}\]
holds pointwise, so 
\[R^\e(A)= \int \eta(\bx) \one_{A^\e}+(1-\eta(\bx)) \one_{(A^C)^\e}d\PP\geq\int \eta(\bx) \one_{B^\e}+(1-\eta(\bx)) \one_{(B^C)^\e} d\PP =R^\e(B).\]
\end{proof}

By taking $B=(A^{-\e})^\e$, $E=(A^\e)^{-\e}$, Lemma~\ref{lemma:to_pcr_set} immediately follows from Lemma~\ref{lemma:decreaseR_e(A)} and the definition of the $\empty^\e$ operation.
\begin{replemma}{lemma:to_pcr_set}
            Let $A$ be any set. Then there exist sets $B,E$ for which $B$ and $E^C$ are pseudo-certifiably robust and $R^\e(B)\leq R^\e(A)$, $R^\e(E)\leq R^\e(A)$.
\end{replemma}

\section{Proof of Lemma~\ref{lemma:limsup_liminf_e_commute} and a Generalization (Lemma~\ref{lemma:limsup_liminf_e_commute_gen})}\label{app:to_decreasing_sequence_proof}
 We begin by reviewing some results of Appendix~\ref{app:properties}. To start, recall that the operation $A^\e=A\oplus \overline{B_\e(\zero)}$, satisfies the relations of~\eqref{eq:union_prop_2}:
 \begin{equation*}
    \left(\bigcup_{i=1}^\infty A_i\right)^\e=\bigcup_{i=1}^\infty A_i^\e \quad , \quad
\left(\bigcap_{i=1}^\infty A_i\right)^\e\subset\bigcap_{i=1}^\infty A_i^\e     \tag{\ref{eq:union_prop_2}}
 \end{equation*}
   In the next section, we will prove a version of Lemma~\ref{lemma:limsup_liminf_e_commute} for other models of perturbations. 
Thus, in the rest of this appendix, rather than focusing on $\Rset^d$, we will assume that $\empty^\e$ is a set operation that satisfies \eqref{eq:union_prop_2}. This formulation will allow us to prove an existence theorem for other models of perturbations. As elements of our space $X$ are not necessarily vectors, we write them in non-bold font ($x$). We now state a generalized version of Lemma~\ref{lemma:limsup_liminf_e_commute}. 
\begin{lemma}\label{lemma:limsup_liminf_e_commute_gen}
Let $A^\e $ be any set operation that satisfies \eqref{eq:union_prop_2}. Then 
     \[\limsup A_n^\e \supset \left( \limsup A_n\right)^\e \quad \text{and}\quad \liminf A_n^\e \supset \left( \liminf A_n\right)^\e\]
 \end{lemma}

 Note that Lemma~\ref{lemma:limsup_liminf_e_commute} is simply Lemma~\ref{lemma:limsup_liminf_e_commute_gen} combined with the fact that $A^\e$ defined as $A\oplus \overline{B_\e(\zero)}$ satisfies \eqref{eq:union_prop_2} (shown in Lemma~\ref{lemma:eps_set_relations_R^d}). 
 \begin{proof}[Proof of Lemma~\ref{lemma:limsup_liminf_e_commute_gen}]
 We start by proving the statement for $\limsup$. By \eqref{eq:union_prop_2},
 \[\limsup A_n^\e=\bigcap_{N=1}^\infty \bigcup_{n=N}^\infty A_n^\e=\bigcap_{N=1}^\infty \left(\bigcup_{n=N}^\infty A_n\right)^\e\supset \left(\bigcap_{N=1}^\infty \bigcup_{n=N}^\infty A_n\right)^\e \]
 The statement for $\liminf$ follows from a similar argument: 
 \[\liminf A_n^\e=\bigcup_{N=1}^\infty \bigcap_{n=N}^\infty A_n^\e\supset \bigcup_{N=1}^\infty \left(\bigcap_{n=N}^\infty A_n\right)^\e= \left(\bigcup_{N=1}^\infty \bigcap_{n=N}^\infty A_n\right)^\e \]
 \end{proof}

\section{More General Results}\label{app:generalize_result}
In this Appendix, we present a generalization of our main result. This generalization concerns other models of perturbations. As discussed in Section~\ref{sec:alt_perturb}, there are many other possible models of perturbations in adversarial learning. A more general result would help address the existence of the adversarial Bayes classifier in these scenarios as well. We provide a motivating example in the next subsection. 

\begin{theorem}\label{th:main_general}
    Let $X$ be a separable metric space and let $\cB(X),\sU(X)$ be the corresponding Borel and universal $\sigma$-algebras respectively. Let $\PP$ be the completion of a measure on $\mathcal B(X)$ restricted to $\sU(X)$. For $A\subset X$, let $\empty^\e\colon A\to A^\e$ be a set operation for which $A^\e$ is universally measurable for all sets $A\in \sU(X)$. Furthermore, assume that $\empty^\e$ satisfies the properties 
\begin{align}
 &\bigcup_{n\in \Nset} A_n^\e=\left(\bigcup_{n\in \Nset} A_n\right)^\e\label{eq:eps_cup_gen}\\
 &\bigcap_{n\in \Nset} A_n^\e \supset \left(\bigcap_{n\in \Nset} A_n\right)^\e\label{eq:eps_cap_gen}
\end{align} 
for every sequence of sets $\{A_n\}$.
Define the loss
\[R^\e(A)=\int(1-\eta(x))\one_{A^\e}(x)+\eta(x)\one_{(A^C)^\e} d\PP\]
Assume that for some minimizing sequence $A_n$, one can always find  a subsequence $A_{n_j}$ for which $\limsup A_{n_j}^\e\dequal \liminf A_{n_j}^\e$, where $\dequal$ is with respect to the measure $\PP$.
 Then there exists a minimizer to $R^\e$ in the $\sigma$-algebra $\sU(X)$.
 \end{theorem}
If $A^\e$ is defined by perturbations in a metric space, Theorem~\ref{th:direct_sum_univ_meas_metric} could be used to conclude that $A^\e$ is universally measurable.
 
 Just like Theorem~\ref{th:existence_basic}, one could also define the $\empty^{-\e}$ operation as $A^{-\e}=((A^C)^\e)^C$, and then argue that there exists a pseudo-certifiably robust minimizer, if $A^\e$ is universally measurable for universally measurable $A$. This statement follows from the same argument as Lemma~\ref{lemma:decreaseR_e(A)}.

 \subsection{A Motivating Example--Applying Theorem~\ref{th:main_general}}
 \label{app:motivating_example}
 To show the utility of Theorem~\ref{th:main_general}, we present an application inspired by NLP. For clarity, we choose a model of discrete perturbations somewhat simpler than Example~\ref{ex:NLP}. Let $X$ be all strings of finite length with a finite alphabet $\cA$. This space is countable and therefore separable. Furthermore, this space is discrete. Recall that in a discrete space, every set is measurable.
 Hence, the Borel $\sigma$-algebra consists of all subsets of $X$, which implies that $\sU(X)$ and $\cB(X)$ are equal. 
 
 We will define our perturbations as swapping two letters in a string at specified positions. Formally, for $w\in X$, let $|w|$ denote the length of the string. Furthermore, let $T$ be the set of functions defined by 
 \begin{align*}
     &T=\left \{b^{i,j}\colon X\to X \Big| b^{i,j}(w)_k=w_k\text{ if }k\neq i,j \text{ or }\max(i,j)>|w|,\right.\\
     &\left.b^{i,j}(w)_i=w_j, b^{i,j}(w)_j=w_i\text{ otherwise}\right\}.
 \end{align*}

 In other words, $b^{i,j}$ will swap the letters at $i$ and $j$ in $w$ if $w$ has length at least $\max(i,j)$ and will keep the string fixed otherwise. Now let $B$ be a finite subset of $T$.
 
 If $A$ is a set of strings, we define 
 \[A^\e=\{b(a)\colon a\in A,b\in B\}.\]

 To start, note that for this definition of the $\empty^\e$ operation, we still have that for every sequence of sets $A_n$, the relations \eqref{eq:eps_cup_gen}, \eqref{eq:eps_cap_gen} hold. The proofs are the same as \eqref{eq:eps_cup}, \eqref{eq:eps_cap} of
 Lemma~\ref{lemma:eps_set_relations_R^d}, so we do not reproduce it here.
 
 Next, we argue that for every sequence $A_n$, there is a subsequence $A_{n_j}$ for which $\liminf A_{n_j}=\limsup A_{n_j}$. The proof is similar to that of Theorem~\ref{th:set_limit_Rockafellar} presented in \citep{RockafellarWets1998}.

 Let $A=\limsup A_n$. Because the space $X$ is countable, one can enumerate $A=\limsup A_n= \{a_n\}_{n=1}^N$, with $N\in \Nset \cup \{\infty\}$. Now we inductively define $N$ nested subsequences of sets $\{A_n^k\}_{n=1}^\infty$ indexed by $k$ as follows:
Because $a_1\in \limsup A_n$, by the characterization of $\limsup$ in \eqref{eq:limsup_sequence_def}, one can find a subsequence $A_{n_m}$ for which $a_1\in A_{n_m}$ for all $m$. Let $A^1_m=A_{n_m}$.

Now given the sequence $\{A_n^k\}$, we inductively define $\{A_n^{k+1}\}$. Consider the element $a_{k+1}$ of the sequence $A=\{a_n\}_{n=1}^N$. If $a_{k+1} \in \limsup_n A_n^k$, then one can find a subsequence $A_{n_m}^k$ for which $a_{k+1}\in A_{n_m}^k$ for all $m$. Similarly, if $a_{k+1} \not \in  \limsup_n A_n^k$, then one can find a subsequence $A_{n_m}^k$ for which $a_{k+1}\not \in A_{n_m}^k$ for all $m$. Thus we choose the subsequence $A_{n_m}^k$ so that either $a_{k+1}\in A_{n_m}^k$ for all $m$ or $a_{k+1}\not \in A_{n_m}^k$ for all $m$. Set $A_{n}^{k+1}=A^k_{n_m}$.

If $N=|\limsup A_n|$ is finite, consider the sequence $A_n^N$. Then $\limsup A_n^N\subset \limsup A_n$. Furthermore, by the construction of the sequences $\{A_n^k\}_{n=1}^N$, for each $a\in \limsup A_n$, either $a \in A_n^N$ for all $n$ or $a\not \in A_n^N$ for all $n$. This observation implies that if $a\in \limsup_n A_n^N$, then $a$ is eventually in the tail of the sequence $A_n^N$ and thus $a\in \liminf_n A_n^N$. Therefore $\liminf_n A_n^N=\limsup_N A_n^N$.

We now analyze the case $N=\infty$. Consider now the diagonal sequence $A_{n_k}=A_k^k$. Again, we have the containment $\limsup_k A_k^K\subset \limsup_n A_n$. Let  $a_j$ be an element of $ A=\limsup A_n$. Then by construction of the sequences $\{A_n^k\}_{n=1}^\infty$, either $a_j\in A_k^k$ for all $k\geq j$ or $a_j\not \in A_k^k$ for all $k\geq j$. Thus every $a\in \limsup A_{n_k}$ is also in $A_{n_j}$ for all sufficiently large $j$ so $a\in \liminf_k A_{n_k}$. Therefore, $\limsup_{k} A_{n_k}=\liminf_k A_{n_k}$.
 
 \subsection{Proving Theorem~\ref{th:main_general}}
 The proof of Theorem~\ref{th:main_general} follows the same steps as the proof of Theorem~\ref{th:existence_basic}. As mentioned in Section~\ref{sec:alt_perturb}, the big picture motivation is that Theorem~\ref{th:existence_basic} followed directly from Lemmas~\ref{lemma:set_limit_2},~\ref{lemma:limsup_liminf_e_commute}, and~\ref{lemma:to_pcr_set} -- we did not use properties of $\empty^\e$ or the space $\Rset^d$ outside of these three Lemmas. The main challenge is generalizing these concepts. With the proper definitions, the proof of Theorem~\ref{th:main_general} is exactly the same as the proof of Theorem~\ref{th:existence_basic}, except that we replace Lemma~\ref{lemma:set_limit_2} with the assumption that for every minimizing sequence $A_n$ there exists a subsequence $A_{n_j}$ for which $ \limsup A_{n_j}^\e- \liminf A_{n_j}^\e$ is a null set.

\begin{proof}[Proof of Theorem~\ref{th:main_general}]
    Let $A_n$ be a minimizing sequence of $R^\e$. Pick a subsequence $A_{n_j}$ for which 
    
    \begin{equation}\label{eq:subsequence_convergence_gen}
        \PP(\limsup A_{n_j}^\e-\liminf A_{n_j}^\e)=0
    \end{equation}
    and set $A=\limsup A_{n_j}$. Then 
    \begin{align*}
        \inf_{A\text{ Borel}} R^\e(A)=&\lim_{j\to \infty} R^\e(A_j)\geq \int \liminf_{j\to \infty} \left(\eta \one_{A_{n_j}^\e} +(1-\eta) \one_{(A_{n_j}^C)^\e}\right) d\PP &\text{(Fatou's Lemma)}\\
        &\geq \int  \eta \liminf_{j\to \infty}\one_{A_{n_j}^\e} +(1-\eta) \liminf_{j\to \infty} \one_{(A_{n_j}^C)^\e} d\PP \\
        &=\int \eta \one_{\limsup_j A_{n_j}^\e} +(1-\eta) \one_{\liminf_j(A_{n_j}^C)^\e} d\PP &\text{(Equation~\ref{eq:subsequence_convergence_gen})}\\
        &\geq \int \eta \one_{\left(\limsup_j A_{n_j}\right)^\e} +(1-\eta) \one_{\left(\liminf_j A_{n_j}^C\right)^\e} d\PP &\text{(Lemma~\ref{lemma:limsup_liminf_e_commute_gen})}\\
        &=\int \eta \one_{\left(\limsup_j A_{n_j}\right)^\e} +(1-\eta) \one_{\left((\limsup_jA_{n_j})^C\right)^\e} d\PP \\
        &=R^\e(A)
    \end{align*}
    Therefore, $A$ is a minimizer of $R^\e$.
\end{proof}

\end{document}